\documentclass[12pt,a4paper,onecolumn]{IEEEtran} 

\usepackage{acronym}
\usepackage{caption} 

\acrodef{ADMM}[ADMM]{alternating direction method of multipliers}
\acrodef{AWGN}[AWGN]{additive white Gaussian noise}
\acrodef{PSNR}[PSNR]{peak signal-to-noise ratio}
\acrodef{GSP}[GSP]{graph signal processing}
\usepackage{etex}
\usepackage{amsmath,graphicx}
\usepackage{epstopdf}
\usepackage{endnotes}
\usepackage{hyperref}
\usepackage{setspace}

\usepackage{epsfig,psfrag}
\usepackage{pst-all}
\usepackage{amsmath,amsthm,amssymb,amsfonts,upref,cite,epsf,color,bm}
\usepackage{graphicx}
\usepackage{color}
\usepackage{amsmath}
\usepackage{graphicx}
\usepackage{calc}

\usepackage{tikz}
\usepackage{pgfplots}

\usepackage{algorithm}
\usepackage{algpseudocode}
\floatname{algorithm}{Algorithm}
\algnewcommand\algorithmicinput{\textbf{Input:}}
\algnewcommand\INPUT{\item[\algorithmicinput]}
\algnewcommand\algorithmicoutput{\textbf{Output:}}
\algnewcommand\OUTPUT{\item[\algorithmicoutput]}

\DeclareMathOperator*{\argmax}{arg\,max}

\DeclareMathOperator*{\argmin}{arg\;min}

\newcommand\vect[1]{\mathbf #1}
\newcommand\emperr[1]{{\rm Err} #1}
\newcommand{\va}{\vect{a}}  
\newcommand{\vb}{\vect{b}}

\newcommand{\ve}{\vect{e}}
  
\newcommand{\vg}{\vect{g}}

\newcommand{\vp}{\vect{p}}  
\newcommand{\vq}{\vect{q}}

\newcommand{\vu}{\vect{u}}  
\newcommand{\vv}{\hat{\vect{x}}}  

\newcommand{\vx}{\vect{x}}  
\newcommand{\vy}{\vect{y}}  
\newcommand{\vz}{\vect{z}}
\newcommand{\mP}{\mathbf{P}}

\newcommand{\mI}{\mathbf{I}}

\newcommand{\mA}{\mathbf{A}}
\newcommand{\mB}{\mathbf{B}}

\newcommand{\mW}{\mathbf{W}}

\newcommand{\signalsize}{N}
\newcommand{\nodespercluster}{100}
\newcommand{\samplespercluster}{100}
\newcommand{\nrcluster}{2}
\newcommand{\numiter}{2000}
\newcommand{\samplesize}{M}
\newcommand\defeq{:=}

\setcounter{secnumdepth}{1}
\newtheorem{theorem}{Theorem}[section]
\newtheorem{lemma}[theorem]{Lemma}

\DeclareMathOperator{\diver}{div}


\title{Scalable Semi-Supervised Learning over Networks using Nonsmooth Convex Optimization\thanks{This
work was partially supported by the Vienna Science and Technology Fund (WWTF) 
under grant ICT15-119, Army Research Office grants W911NF-15-1-0479 
and W911NF-15-1-0241 and US Department of Energy grant DE-NA0002534.}\vspace*{2mm}}

\usepackage[T1]{fontenc}
\usepackage[utf8]{inputenc}
\usepackage{authblk}

\author[1]{Alexander Jung}
\author[2]{Alfred O.\ Hero III}
\author[1]{Alexandru Mara}
\author[3]{Sabeur Aridhi}
\affil[1]{\normalsize Dept.\ of CS, Aalto University, Finland; firstname.lastname(at)aalto.fi}
\affil[2]{\normalsize Dept.\ of EE and CS,The University of Michigan, MI; hero@eecs.umich.edu}

\affil[3]{\normalsize INRIA Nancy Grand Est, 54600 Villers-les-Nancy, France; sabeur.aridhi(at)loria.fr}
\begin{document}

\maketitle
\thispagestyle{plain}
\pagestyle{plain}

\begin{abstract}
We propose a scalable method for semi-supervised (transductive) learning 
from massive network-structured datasets. 
Our approach to semi-supervised learning is based on representing the underlying 
hypothesis as a graph signal with small total variation. 
Requiring a small total variation of the graph signal representing 
the underlying hypothesis corresponds to the central smoothness assumption 
that forms the basis for semi-supervised learning, i.e., input points forming clusters 
have similar output values or labels. 
We formulate the learning problem as a nonsmooth 
convex optimization problem which we solve by 
appealing to Nesterov's optimal first-order method for nonsmooth optimization. 
We also provide a message passing formulation of the learning method 
which allows for a highly scalable implementation in big data frameworks. 
\end{abstract}

	

	\section{Introduction}
	\label{sec_intro}

Modern technological systems generate (heterogeneous) data at unprecedented scale, i.e., 
``Big Data'' \cite{McKinseyBigdata,DonohoCursesBlessings,HadoopDefinitiveGuide,BigDataNetworksBook}. 
While lacking a precise formal definition, Big Data problems typically 
share four main characteristics: (i) large data volume, (ii) high speed of data generation, 
(iii) data is heterogeneous, i.e., partially labeled or unlabeled, 
mixture of audio, video and text data and (iv) data is noisy, i.e., there are statistical 
variations due to missing labels, labeling errors, or poor data curation \cite{HadoopDefinitiveGuide}. 
Moreover, in a wide range of big data applications, e.g., social networks, sensor networks, 
communication networks, and biological networks an intrinsic graph (or network) 
structure is present. This graph structure reflects either the physical properties of a system 
(e.g., public transportation networks) or statistical dependencies 
(e.g., probabilistic graphical models for bioinformatics). 
Quite often, these two notions of graph structure coincide: 
in a wireless sensor network, the graph modeling the communication links between nodes and 
the graph formed by statistical dependencies between sensor measurements 
resemble each other since both graphs are induced by the nodes mutual proximity 
\cite{WieselHero2012,Moldaschl2014,Quan20009}. 

On the algorithmic side, having a graph model for the observed datapoints 
faciliates scalable distributed data processing, in the form of message 
passing on the graph. On a higher-level, graph models are suitable to deal with data 
of diverse nature, since they only require a weak notion 
of similarity between datapoints. Moreover, graph models allow 
to capitalize on massive amounts of unlabeled data via semi-supervised learning. 
In particular, semi-supervised learning exploits the information contained in 
large amounts of unlabeled datapoints by considering 
their similarities to a small number of labeled datapoints. 

In this paper, we consider the problem of semi-supervised learning using  
a graph model for the raw data. The observed data consists of a small number 
of labeled datapoints and a huge amount of unlabeled datapoints. 
We tackle this learning problem by casting the dataset as a graph signal. In this 
graph signal model, the different dimensions of the data are identified as variables and the 
observed values of these variables are called signals. These signals are represented by 
nodes of a (empirical) graph whose edges represent pairwise dependencies between 
signals. 
Imposition of such graph signal structure on the data is analogous to making the 
{\emph smoothness assumption of semi-supervised learning} \cite{SemiSupervisedBook}: 
signals that are connected by an edge in the graph have similar labels. 
In other words, the graph signal is expected to reflect the underlying graph 
structure in the sense that the labels of signals on closely connected nodes 
have high mutual correlation and thus these signals form close-knit clusters or communities \cite{Fortunato2009}. 
In order to quantify the smoothness assumption underlying semi-supervised learning, one can 
use different measures to incorporate the topological dependency structure of graphs signals. 
For example, one can project the signals onto the column space of the graph Laplacian matrix, 
using the squared norm of the projected signals, i.e., the graph Laplacian form, as a measure 
of smoothness. This is the basis for many well-known label propagation methods \cite{SemiSupervisedBook}. 

In contrast, the approach proposed in this paper is 
based on using (graph) total variation \cite{shuman2013}, which 
provides a more natural match between smoothness and the community 
structure of the data, i.e., input or feature signal nodes forming a 
community or cluster should yield similar output values or labels.


\subsection{Contributions and Outline} 
In Section \ref{sec2_setup}, we formulate semi-supervised learning 
using a graph model for the observed data as a convex optimization 
problem. By adapting Nesterov's method for nonsmooth convex 
optimization, which is reviewed in Section \ref{sec3_smooth}, 
we propose an efficient learning algorithm in Section \ref{sec4_signal_recovery}. 
We then present a message passing formulation of our learning algorithm in 
Section \ref{sec6_message}, which only requires local information updating.
We also discuss how to implement the message passing formulation 
for graphs of massive size. 

\subsection{Notation}
Matrices are denoted by boldcase uppercase letters (e.g. $\mA$) and
column vectors are denoted by boldface lowercase letters (e.g. $\vx$). 
The $i$th entry of the vector $\vx$ is denoted by $x_i$. and the entry in the 
$i$th row and $j$th column of matrix $\mathbf{A}$ is $A_{i,}$.  
For vectors $\vx,\vy \!\in\! \mathbb{R}^{N}$ and matrices 
$\mathbf{X},\mathbf{Y}\!\in\! \mathbb{R}^{N \times N}$, we define 
the inner products $\langle \vx,\vy \rangle_2 \defeq \sum_i x_i y_i$ and
$\langle \mathbf{X}, \!\mathbf{Y} \rangle_{\rm F} \defeq \sum_{i,j} X_{i,j} Y_{i,j}$ 
with induced norms $\| \vx \|_2 \defeq  \sqrt{ \langle \vx, \vx \rangle_2 }$ and
$\| \mathbf{X}\|_{\rm F} \defeq  \sqrt{\langle \mathbf{X}, \mathbf{X} \rangle_{\rm F}}$.
For a generic Hilbert space $\mathcal{H}$, we denote its inner product by 
$\langle\cdot,\cdot \rangle_\mathcal{H}$. Given a linear operator $\mathbf{B}$ 
mapping the Hilbert space $\mathcal{H}_{1}$ into 
the Hilbert space $\mathcal{H}_{2}$, we denote its adjoint by 
$\mathbf{B}^{*}$ and by $\| \mathbf{B} \|_{\rm op} \defeq  \sup_{\| \vx \|_{\mathcal{H}_{1}} \leq 1} \| \mB \vx\|_{\mathcal{H}_{2}}$ 
its operator norm. The operator norm of a matrix $\mathbf{A} \in \mathbb{R}^{M \times N}$, interpreted as a mapping 
from Hilbert space $\mathbb{R}^{M}$ to $\mathbb{R}^{N}$, reduces to the spectral norm 
$\| \mathbf{A}\|_{2} \defeq  \sup_{ \vx \in {\mathbb{R}^{N}}\setminus \{\mathbf{0}\}} \| \mA \vx\|_{2} / \| \vx \|_{2}$. 
The $i$th column of the identity matrix $\mI$ is denoted by $\ve_i$. 
Given a closed convex subset $\mathcal{C}\subseteq \mathcal{H}$ of a Hilbert space, we denote
by $\pi_{\mathcal{C}}(\vx)\!=\!\underset{\vz \in \mathcal{C}}{\argmin} \|\vz - \vx\|_{\mathcal{H}}$ 
the orthogonal projection on $\mathcal{C}$. For a diagonal matrix $\mathbf{D}\in \mathbb{R}^{\signalsize\!\times\!\signalsize}$ 
with non-negative main diagonal entries $d_{i,i}$, we denote by $\mathbf{D}^{1/2}$ the diagonal matrix 
with main diagonal entries $\sqrt{d_{i,i}}$. 

\section{Problem Formulation}
\label{sec2_setup}

We consider a heterogeneous dataset $\mathcal{D}\!=\!\{ z_{i} \}_{i=1}^{\signalsize}\!\subseteq\!\mathcal{Z}$ 
consisting of $\signalsize$ datapoints $z_{i}\!\in\!\mathcal{Z}$, 
which might be of significantly different nature, e.g., $z_{1} \in \mathbb{R}^{d}$, $z_{2}$ is a continuous-time signal 
(i.e., $z_{2}: \mathbb{R}\!\rightarrow\!\mathbb{R}$) and $z_{3}$ might represent the bag-of-words histogram of 
a text document. Thus, we assume the input space $\mathcal{Z}$ rich enough to accomodate 
for strongly heterogeneous data. 
Associated with the dataset $\mathcal{D}$ is an undirected empirical graph 
$\mathcal{G} = (\mathcal{V}, \mathcal{E}, \mathbf{W})$ with node set $\mathcal{V}=\{1,\ldots,\signalsize\}$, 
edge set $\mathcal{E}\subseteq \mathcal{V} \times \mathcal{V}$ 
and symmetric weight matrix $\mathbf{W}\in \mathbb{R}^{\signalsize \times \signalsize}$. 
The nodes represent the datapoints, i.e., node $i$ corresponds to the datapoint $z_{i}$. 
An undirected edge $(i,j) \in \mathcal{E}$ encodes some notion of 
(physical or statistical) similarity from datapoint $z_{i}$ to datapoint $z_{j}$. 
Moreover, the presence of an edge $(i,j) \in \mathcal{E}$ between nodes $i,j \in \mathcal{V}$ 
is indicated by a nonzero entry $W_{i,j}=W_{j,i}$ of the weight matrix $\mW$. 
Given an edge $(i,j) \in \mathcal{E}$, the nonzero value $W_{i,j}\!>\!0$ 
represents the strength of the connection from node $i$ to node $j$. We assume 
the empirical graph to be simple, i.e., it contains no self-loops ($W_{i,i}\!=\!0$ for all $i\!\in\!\mathcal{V}$). 
The neighborhood $\mathcal{N}(i)$ and degree $d_{i}$ of node $i \in \mathcal{V}$ is defined, respectively, 
as 
\begin{equation} 
\label{equ_def_neighborhood}
\mathcal{N}(i) \defeq \{ j \in \mathcal{V} : (i,j)\!\in\!\mathcal{E} \}
\end{equation} 
 and   
\begin{equation} 
d_{i} \defeq \sum_{j \in \mathcal{N}(i)} W_{i,j}. \label{equ_def_node_degree}
\end{equation}
An key parameter for the characterization of a graph is the maximum node degree \cite{NewmannBook} 
\begin{equation}
\label{equ_def_max_node_degree}
 d_{\rm max} \defeq \max_{i \in \mathcal{V}} d_{i}. 
\end{equation} 

Within supervised machine learning, we assign to each 
datapoint $z_{i} \in \mathcal{D}$ an output value or label 
$x_{i} \in \mathbb{R}$.\footnote{We highlight that the term ``label'' is typically 
reserved for discrete-valued or categorial output variables $x_{i}$ \cite{BishopBook}. 
Since we can always represent the values of categorial output variables by real numbers, 
we will formulate our learning method for real-valued ouput variables $x_{i} \in \mathbb{R}$. 
Our learning method Alg.\ \ref{nestrov_alg}, which is based on using the 
squared error loss to quantify the empirical error, can also be used for 
classification by suitably quantizing the predicted ouput values. Extensions 
to other loss functions, more suitable to characterize the empirical 
error for discrete-valued or categorial labels, will be a focus of future work.} 
We emphasize that the label $x_{i}$ of node $i \in \mathcal{V}$ can take 
on binary values (i.e., $x_{i} \!\in\! \{0,1\}$), multi-level discrete values 
(i.e., $x_{i} \!\in\! \{1,\ldots,K\}$, with $K$ being the number of classes or clusters), 
or continuous values in $\mathbb{R}$.  
We can represent the entire labeling of the empirical graph conveniently 
by a vector $\vx \!\in\! \mathbb{R}^{\signalsize}$ whose $i$th entry is the 
label $x_{i}$ of node $i \in \mathcal{V}$. For a small subset $\mathcal{S}$ 
of datapoints $z_{i}$ we are provided with initial labels $y_{i}$. With slight 
abuse of notation, we refer by $\mathcal{S}$ also to the subset of nodes $i\in \mathcal{V}$ 
representing the datapoints $z_{i}$ for which initial labels $y_{i}$ are available.  
We refer to the set $\mathcal{S} \subseteq \mathcal{V}$ as the \emph{sampling set}, 
where typically $\samplesize \defeq |\mathcal{S}| \ll \signalsize$.

In order to learn the entire labeling $\vx$ from the initial labels $\{ y_{i} \}_{i \in \mathcal{S}}$, 
we invoke the basic smoothness assumption for semi-supervised learning \cite{SemiSupervisedBook}: 
If two points $z_1$, $z_2$ are close, with respect to a given topology on the input space $\mathcal{Z}$, 
then so should be the corresponding labels $x_1$, $x_2$, 
with respect to some distance measure on the label space $\mathbb{R}$.
For quantifying the smoothness of a labeling, 
we appeal to the discrete calculus for graph signals, which rests 
on the concept of a gradient for graph signals \cite[Sec.\ 13.2]{SemiSupervisedBook}. 
In order to draw on discrete calculus for quantifying smoothness of a 
labeling, we interpret the labels $x_{i}$, for $i \in \mathcal{V}$, as 
the values of a graph signal, i.e., a mapping $x[\cdot]: \mathcal{V} \rightarrow \mathbb{R}$ 
which maps node $i \!\in\! \mathcal{V}$ to graph signal value $x[i]\!=\!x_{i}$. 
Using this interpretation, we measure the smoothness 
of the labels via the (local) gradient $\nabla_{\!i} \mathbf{x}$ 
at node $i\!\in\!\mathcal{V}$, given as \cite{shuman2013} 
\begin{equation} 
\label{equ_def_local_gradient}
\big( \nabla_{\!i} \mathbf{x} \big)_{j} \!\defeq\! \sqrt{W_{i,j}}(x_{j}\!-\!x_{i}) .
\end{equation}
The norm $\|\nabla_{\!i}\vx\|_{2}\!=\!\sqrt{\sum_{j\in\mathcal{V}} W_{i,j} (x_{j}\!-\!x_{i})^2 }$ 
provides a measure for the local variation of the graph signal $\vx$ at node $i\!\in\!\mathcal{V}$. 
The (global) smoothness of the labels $x_{i}$ is then quantified by the total variation 
\cite{shuman2013}: 
\begin{equation} 
\label{equ_def_TV_norm}
\| \vx \|_{\rm TV} \defeq  \sum_{i \in \mathcal{V}} \| \nabla_{\!i} \mathbf{x} \|_{2}\!=\!\sum_{i \in \mathcal{V}} \sqrt{\sum_{j \in \mathcal{V}}W_{i,j}(x_{j}\!-\!x_{i})^{2} }. 
\end{equation} 
Note that the total variation \eqref{equ_def_TV_norm} is a seminorm, 
being equal to $0$ for labelings that are constant over connected graph components.

The basic idea of semi-supervised learning is to find a labeling $\vx$ of the 
datapoints $z_{i}$ by balancing the \emph{empirical error}
\begin{equation} 
\label{equ_def_emp_error}
\emperr[\vx] \defeq \sqrt{(1/2|\mathcal{S}|) \sum_{i \in \mathcal{S}} (x_{i}\!-\!y_{i})^{2}}, 
\end{equation}
which represents the deviation of the learned labels $x_{i}$ from the initial labels $y_{i}$, 
with the smoothness $\| \vx \|_{\rm TV}$. If we fix a maximum level $\varepsilon>0$ tolerated 
for the empirical error $\emperr[\vx]$, we can formulate semi-supervised learning 
as the optimization problem 
\begin{align}
\label{equ_min_constr}
\hat{\vx} & \!\in\! \argmin_{\vx \in \mathcal{Q}} \| \vx\|_{\rm TV} \nonumber \\ 
& \mbox{ with } \mathcal{Q}\!\defeq\!\{\vx \in \mathbb{R}^{\signalsize}\!: \emperr[\vx]\!\leq\!\varepsilon\}.
\end{align}
Since the objective function in \eqref{equ_min_constr} is the seminorm $\|\vz\|_{\rm TV}$, which is a convex function 
and also the constraint set $\mathcal{Q}$ is a convex set,\footnote{The seminorm $\|\vx\|_{\rm TV}$ 
is convex since it is homogeneous ($\| \alpha\vx\|_{\rm TV}\!=\!|\alpha|\|\vx\|_{\rm TV}$ for $\alpha \in \mathbb{R}$) 
and satisfies the triangle inequality ($\|\vx\!+\!\vy\|_{\rm TV} \!\leq\! \|\vx\|_{\rm TV}\!+\!\|\vy\|_{\rm TV}$).
These two properties imply convexity \cite[Section 3.1.5]{BoydConvexBook}.} 
problem \eqref{equ_min_constr} is a convex optimization problem.
As the notation in \eqref{equ_min_constr} suggests, and which can be verified by simple examples, 
there typically exist several solutions for this optimization problem. However, the methods we consider 
for solving \eqref{equ_min_constr} in the following do not require uniqueness of the solution, i.e., they 
work even if there are multiple optimal labelings $\hat{\vx}$. 

For completeness, we also mention an alternative convex formulation of the 
recovery problem \eqref{equ_min_constr}, based on using a penalty 
term for the total variation instead of constraining the empirical error: 
\begin{equation}
\label{equ_denoising_opt1}
\hat{\vx} \in  \argmin_{\vx \in \mathbb{R}^{\signalsize}} \emperr[\vx]  + \lambda \| \vx \|_{\rm TV}.  
\end{equation}
The regularization parameter $\lambda\!>\!0$ trades off small empirical risk $\emperr[\hat{\vx}]$ against 
small total variation $\| \hat{\vx} \|_{\rm TV}$ of the learned labeling $\hat{\vx}$. 

The convex optimization problems \eqref{equ_min_constr} and \eqref{equ_denoising_opt1} 
are related by convex duality \cite{BoydConvexBook, BertsekasNonLinProgr}: For each choice for $\varepsilon$
there is a choice for $\lambda$ (and vice-versa) such that the solutions of \eqref{equ_min_constr} 
and \eqref{equ_denoising_opt1} coincide. However, the relation between $\varepsilon$ and $\lambda$ for this 
equivalence to hold is non-trivial and determining the corresponding $\lambda$ for 
a given $\varepsilon$ is as challenging as solving the problem \eqref{equ_min_constr} itself \cite{becker2011nesta}.  
 
From a practical viewpoint, an advantage of the formulations \eqref{equ_min_constr} 
is that the parameter $\varepsilon$ may be interpreted as a noise level, which can be estimated or 
adjusted more easily than the parameter $\lambda$ 
of the learning problem \eqref{equ_denoising_opt1}. 
For the rest of the paper, we will focus on the learning problem
\eqref{equ_min_constr}.

Finally, for a dataset $\mathcal{D}$ whose empirical graph $\mathcal{G}$ is 
composed of several (weakly connected) components \cite{NewmannBook}, 
the learning problem \eqref{equ_min_constr} decompose into independent 
subproblems, i.e., one learning problem of the form \eqref{equ_min_constr} 
for each of the components. Therefore, we will henceforth, without loss 
of generality, consider datasets whose empirical graph $\mathcal{G}$ is (weakly) connected.

\vspace*{-4mm}
\section{Optimal Nonsmooth Convex Optimization} 
\label{sec3_smooth}
\vspace*{-2mm}

We will now briefly review a recently proposed method \cite{nestrov2005} for solving 
nonsmooth convex optimization problems, i.e., optimization problems 
with a non-differentiable objective function, such as \eqref{equ_min_constr}. 
This method exploits a particular structure, which is present in the problems 
\eqref{equ_min_constr}. In particular, this optimization method is based on (i) approximating 
a nonsmooth objective function by a smooth proxy and (ii) then 
applying an optimal first order (gradient based) method for minimizing this proxy. 

Consider a structured convex optimization problem of the generic form 
\begin{equation}\label{equ_structured_problem_Nesterov}
\hat{\vx}\!\in\!\underset{\vx\!\in\!\mathcal{Q}_1}{\argmin}~f(\vx) \defeq \hat{f}(\vx)\!+\!\underbrace{\underset{\vu \in \mathcal{Q}_2}{\max} \langle \vu,\mB\vx\rangle_{\mathcal{H}_2}\!-\!\hat{g}(\vu)}_{\defeq h_{0}(\vx)}.
\end{equation}
Here, $\mB:\mathcal{H}_1\rightarrow \mathcal{H}_2$ is a linear operator from a 
finite dimensional Hilbert space $\mathcal{H}_1$ to another 
finite dimensional Hilbert space $\mathcal{H}_2$, both defined over the real numbers. 
The set $\mathcal{Q}_1\subseteq \mathcal{H}_1$ is required to be a closed 
convex set and the set $\mathcal{Q}_2 \subset \mathcal{H}_2$ is a bounded, closed convex set. 
The functions $\hat{f}$ and $\hat{g}$ in \eqref{equ_structured_problem_Nesterov} are required 
to be continuous and convex on $\mathcal{Q}_1$ and $\mathcal{Q}_2$, respectively.
Moreover, the function $\hat{f}$ is assumed differentiable with gradient $\nabla \hat{f}$ being Lipschitz-continuous with constant $L\geq 0$, i.e., 
\begin{equation} \label{lip_cont}
\| \nabla \hat{f}(\vy)\!-\!\nabla \hat{f}(\vx) \|_{\mathcal{H}_1} \leq L \| \vy\!-\!\vx \|_{\mathcal{H}_1}. 
\end{equation}

\subsection{Smooth Approximation of Nonsmooth Objective}

In order to solve the nonsmooth problem \eqref{equ_structured_problem_Nesterov}, 
we approximate the non-differentiable component $h_{0}(\vx)$
by the smooth function
\begin{equation}\label{maxmu}
h_\mu(\vx)\!\defeq\!\underset{\vu\!\in\!\mathcal{Q}_2}{\max} \langle \vu,\mB\vx \rangle_{\mathcal{H}_2}\!-\!\hat{g}(\vu)\!-\!(\mu/2)\|\vu\|_{\mathcal{H}_2}^2
\end{equation}
with the smoothing parameter $\mu> 0$, yielding 
\begin{equation}\label{smoothed}
f_\mu(\vx)\!\defeq\!\hat{f}(\vx)\!+\!\underset{\vu \in \mathcal{Q}_2}{\max} \langle \vu,\mB\vx \rangle_{\mathcal{H}_2}\!-\!\hat{g}(\vu)\!-\!(\mu/2)\|\vu\|_{\mathcal{H}_2}^2. 
\end{equation} 
The objective function $f(\vx)$ of the original problem \eqref{equ_structured_problem_Nesterov} 
is obtained formally from \eqref{smoothed} for the choice $\mu=0$, i.e., $f(\vx) = f_{0}(\vx)$. 
Since the function $g(\vu)\!=\! \| \vu \|_{\mathcal{H}_2}^2$ is strongly 
convex, the optimization problem \eqref{maxmu} 
has a unique optimal point 
\begin{equation}\label{umu}
\mathbf{u}_\mu (\vx)\!=\!\underset{\mathbf{u} \in \mathcal{Q}_2}{\argmax}~ \langle \mathbf{u},\mB \vx\rangle_{\mathcal{H}_2}\!-\!\hat{g}(\vu)\!-\!(\mu/2)\|\mathbf{u}\|_{\mathcal{H}_2}^2.
\end{equation}
According to \cite[Theorem 1]{nestrov2005}, the function $h_{\mu}(\vx)$ (cf. \eqref{maxmu}) is differentiable with gradient  
\begin{equation}
\nabla h_\mu (\vx) = \mB^* \mathbf{u}_\mu(\vx), \nonumber
\end{equation}
which can be shown to be Lipschitz continuous with constant $(1/\mu)\|\mB\|_{\rm op}^2$.
Since the gradient $\nabla \hat{f}(\vx)$ of $\hat{f}(\vx)$ is 
assumed Lipschitz continuous with constant 
$L$ (cf. \eqref{lip_cont}), the function $f_\mu(\vx)$ (cf. \eqref{smoothed}) 
 has gradient
\begin{equation}\label{gradient_general}
\nabla f_\mu(\vx) = \nabla \hat{f}(\vx) + \mB^* \mathbf{u}_\mu(\vx)
\end{equation}
which is Lipschitz continuous with constant 
\begin{equation}\label{lmu}
L_\mu \defeq L + (1/\mu)\|\mB\|_{\rm op}^2.
\end{equation}

Furthermore, by evaluating \cite[Eq.\ (2.7)]{nestrov2005}, we have
\begin{equation}
\label{uniform}
f_{\mu}(\vx)\leq f_0(\vx) = f(\vx) \leq f_\mu(\vx) + (\mu/2)\underset{\vu\in\mathcal{Q}_2}{\max}~\|\vu\|_{\mathcal{H}_2}^2,
\end{equation}
which verifies that $f_\mu(\vx)$ is a uniform smooth approximation of 
the objective function $f(\vx)$ in \eqref{equ_structured_problem_Nesterov}.

By replacing the objective $f(\vx)$ in \eqref{equ_structured_problem_Nesterov} 
with its smooth approximation $f_{\mu}(\vx)$, 
we obtain the smooth optimization problem 
\begin{equation}\label{minfmu}
\hat{\vx}_{\mu} \in \underset{\vx\in \mathcal{Q}_{1} \subseteq \mathcal{H}_{1}}{\argmin}~f_\mu(\vx). 
\end{equation} 
The original nonsmooth problem \eqref{equ_structured_problem_Nesterov} 
is obtained formally from the smooth approximation \eqref{minfmu} for the 
particular choice $\mu=0$. For nonzero $\mu>0$, the solutions $\hat{\vx}$ 
of \eqref{equ_structured_problem_Nesterov} will be different from the 
solutions $\hat{\vx}_{\mu}$ of \eqref{minfmu} in general. However, 
for sufficiently small $\mu$ any solution $\hat{\vx}_{\mu}$ of \eqref{minfmu} 
will be also an approximate solution to \eqref{equ_structured_problem_Nesterov}.
We can relate the optimal values $f(\hat{\vx})$ and $f_{\mu}(\hat{\vx}_{\mu})$ 
of the original problem \eqref{equ_structured_problem_Nesterov} 
and its smooth approximation \eqref{minfmu}, respectively, with the help 
of \eqref{uniform}. Indeed, by inserting the optimal points $\hat{\vx}_{\mu}$ and 
$\hat{\vx}$ into the corresponding objective functions in \eqref{uniform}, we obtain 
\begin{equation}
\label{equ_uniform_optima}
f_{\mu}(\hat{\vx}_{\mu})\!\leq\!f(\hat{\vx}) \leq f_{\mu}(\hat{\vx}_{\mu})\!+\!(\mu/2) \underset{\vu\in\mathcal{Q}_2}{\max}~\|\vu\|_{\mathcal{H}_2}^2.
\end{equation} 
Thus, the optimal value $f_\mu(\hat{\vx}_\mu)$ of the smoothed 
problem \eqref{smoothed} provides an estimate for the optimal 
value $f(\hat{\vx})$ of the original problem \eqref{equ_structured_problem_Nesterov}.

\subsection{Optimal Gradient Method for Smooth Minimization}\label{nesteroviterations}
For solving the smooth optimization problem \eqref{minfmu}, being a proxy for the 
original nonsmooth problem \eqref{equ_structured_problem_Nesterov}, 
we apply an optimal first-order method \cite{nestrov2005,becker2011nesta}. 
This method achieves the optimal worst-case rate of convergence among all gradient 
based methods \cite{nestrov04,nestrov2005}. We summarize this 
method for solving \eqref{minfmu} in Alg.\ \ref{nestro}, 
which requires as input the smoothing parameter $\mu>0$, an initial guess $\vx_0$ 
and a valid Lipschitz constant $\hat{L}$ for the gradient \eqref{gradient_general}, i.e., 
satisfying $\hat{L}\geq L_\mu = L + (1/\mu) \|\mathbf{B}\|_{\rm op}^{2}$.
\begin{algorithm}[h]
\caption{Nesterov's algorithm for solving \eqref{minfmu}}{}
\begin{algorithmic}[1]
\renewcommand{\algorithmicrequire}{\textbf{Input:}}
\renewcommand{\algorithmicensure}{\textbf{Output:}}
\Require smoothing parameter $\mu$, initial guess $\vx_{0}$, 
Lipschitz constant $\hat{L}\geq L_{\mu}\!=\!L\!+\!(1/\mu) \|\mathbf{B}\|_{\rm op}^{2}$ (cf.\ \eqref{lmu}) 
\Statex\hspace{-6mm}{\bf Initialize:} iteration counter $k\!\defeq\!0$
\Repeat
\State $\vg_{k}\!\defeq\!\nabla f_{\mu} (\vx_{k})\!=\!\nabla \hat{f}(\vx)\!+\!\mB^* \mathbf{u}_\mu(\vx)$ with $\mathbf{u}_\mu(\vx)$ given by \eqref{umu} \label{algostep1} 
\vspace*{2mm}
\State $\vv_{k}\!\defeq\!\underset{\vx \in \mathcal{Q}_{1}}{\argmin} (\hat{L}/2) \| \vx\!-\!\vx_{k} \|_{\mathcal{H}_{1}}^{2} \!+\! \langle \vg_{k}, \vx\!-\!\vx_{k} \rangle_{\mathcal{H}_{1}}$
\vspace*{2mm}
\State $\vz_{k}\!\defeq\! \underset{\vx\in\mathcal{Q}_{1}}{\argmin} (\hat{L}/2) \| \vx\!-\!\vx_{0}\|_{\mathcal{H}_{1}}^{2}\!+\!\sum_{l=0}^k \frac{l\!+\!1}{2} \langle \vg_l, \vx\!-\!\vx_{l} \rangle_{\mathcal{H}_{1}}\label{un}$
\State $\vx_{k\!+\!1} \!\defeq\! \frac{2}{k\!+\!3} \vz_{k}\!+\!\Big(1\!-\!\frac{2}{k\!+\!3}\Big)\vv_k$ \label{algostep4}
\State $k\!\defeq\! k\!+\!1$
\Until{stopping criterion is satisfied}
\Ensure $\vv_k$
\end{algorithmic}
\label{nestro}
\end{algorithm}
For a particular stopping criterion of Alg.\ \ref{nestro}, one can monitor the relative decrease 
in the objective function $f_{\mu}(\vv_{k})$ \cite[Sec.\ 3.5.]{becker2011nesta}. 
Another option, which is used in our numerical experiments 
(cf.\ Section \ref{sec_num_exp}), is to run Alg.\ \ref{nestro} for 
a fixed number of iterations. 

The steps $2$ and $3$ of Alg.\ \ref{nestro} amount to computing the projected 
gradient descent step for the smooth optimization problem \eqref{minfmu}. However, what 
sets Alg.\ \ref{nestro} apart from standard gradient descent methods is step $4$. This 
step uses all previous gradient information in order to compute a projected minimizer $\vz_{k}$ 
of an increasingly more accurate approximation of the objective function $f_{\mu}(\vx)$. The new iterate 
$\vx_{k+1}$ is then obtained in step $5$ as a convex combination 
of the projected gradient descent step $\hat{\vx}_{k}$ and the minimizer $\vz_{k}$ 
of the approximation to the objective function. 

The output $\vv_{k}$ of Alg.\ \ref{nestro} satisfies \cite[Theorem 2]{nestrov2005} 
\begin{equation}
\label{equ_upper_bound_nesterov}
 f_{\mu} (\vv_k) - f_{\mu} (\hat{\vx}_{\mu})\leq \frac{2 \hat{L} \|\hat{\vx}_{\mu} - \vx_0\|_2^2}{(k+1)(k+2)}
\end{equation} 
for any optimal point $\hat{\vx}_{\mu}$ of \eqref{minfmu}. 
The convergence rate predicted by \eqref{equ_upper_bound_nesterov}, i.e., 
the error $f_\mu (\vv_k) - f_\mu (\hat{\vx}_{\mu})$ decaying proportional to 
$1/k^2$ with the iteration counter $k$,
is optimal among all gradient-based minimization methods 
for the class of continuously differentiable functions with 
Lipschitz continuous gradient \cite[Theorem 2.1.7]{nestrov04}.

The characterization \eqref{equ_upper_bound_nesterov} can be used to bound 
the number of iterations needed to run Alg.\ \ref{nestro} such that it delivers 
an approximate solution $\vv_{k} \in \mathcal{Q}_1$ for the nonsmooth problem 
\eqref{equ_structured_problem_Nesterov} with prescribed accuracy $\delta$, i.e., 
the output $\vv_k$ satisfies $f(\vv_{k}) - f(\hat{\vx}) \leq \delta$. 
\begin{lemma}\label{altern}
Let $\hat\vx \in \mathbb{R}^\signalsize$ and $\hat\vx_\mu\in \mathbb{R}^\signalsize$
be optimal points of the original problem \eqref{equ_structured_problem_Nesterov} and its 
smoothed proxy \eqref{minfmu}, respectively. Denote $D\defeq \underset{\vu\in\mathcal{Q}_2}{\max}~\|\vu\|_{\mathcal{H}_2}^2$ 
and assume Alg.\ \ref{nestro} is used with Lipschitz constant 
$\hat{L}\!=\!L\!+\!(1/\mu)\|\mB\|_{\rm op}^2$. 
Then, the output $\vv_{k}$ after $k$ iterations of Alg.\ \ref{nestro} satisfies
\begin{equation}\label{statement1}
f(\vv_{k}) - f(\hat{\vx})  \leq f_{\mu}(\vv_{k}) - f_\mu(\hat{\vx}_{\mu}) + (\mu/2) D.
\end{equation}
For the choice $\mu = \delta/D$, Alg.\ \ref{nestro} delivers 
a solution $\vv_{k}$ for the non-smooth problem \eqref{equ_structured_problem_Nesterov} 
with accuracy $\delta$, i.e., 
\begin{align} 
\label{equ_bound_acc}
 f(\vv_k)\!-\!f(\hat{\vx})  &\leq \delta \mbox{ for all } k\!\geq\!k_{\delta}  \\
k_{\delta}\!\defeq\!(2/\delta) &\|\hat{\vx}_{\mu}\!-\!\vx_{0}\|_{2} \sqrt{L\delta\!+\!D\|\mathbf{B}\|_{\rm op}^{2}}. \nonumber
\end{align} 
\end{lemma}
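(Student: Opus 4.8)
The plan is to prove the two assertions in sequence, both obtained by combining the uniform smoothing bound \eqref{uniform} with Nesterov's convergence rate \eqref{equ_upper_bound_nesterov}; no new machinery is needed. First I would establish the sandwich inequality \eqref{statement1}. Evaluating the right-hand inequality of \eqref{uniform} at the output $\vv_k$ gives $f(\vv_k) \leq f_\mu(\vv_k) + (\mu/2)D$, while the left-hand inequality of \eqref{equ_uniform_optima} gives $f_\mu(\hat{\vx}_\mu) \leq f(\hat{\vx})$. Subtracting the second from the first yields
\[ f(\vv_k) - f(\hat{\vx}) \leq f_\mu(\vv_k) - f_\mu(\hat{\vx}_\mu) + (\mu/2)D, \]
which is exactly \eqref{statement1}. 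This step is essentially bookkeeping with the two-sided approximation guarantee already recorded in \eqref{uniform}.

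Next I would insert Nesterov's rate \eqref{equ_upper_bound_nesterov} into \eqref{statement1} to obtain
\[ f(\vv_k) - f(\hat{\vx}) \leq \frac{2\hat{L}\,\|\hat{\vx}_\mu - \vx_0\|_2^2}{(k+1)(k+2)} + (\mu/2)D. \]
The guiding idea is to \emph{balance} the two error contributions: the smoothing bias $(\mu/2)D$ and the optimization error. Choosing $\mu = \delta/D$ makes the bias exactly $\delta/2$, so it remains only to drive the optimization error below $\delta/2$. With this choice the Lipschitz constant becomes $\hat{L} = L + (D/\delta)\|\mB\|_{\rm op}^2 = (L\delta + D\|\mB\|_{\rm op}^2)/\delta$, a substitution I would record explicitly since it is what couples $\hat{L}$ to $\delta$.

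It then remains to solve $2\hat{L}\|\hat{\vx}_\mu - \vx_0\|_2^2 / \big((k+1)(k+2)\big) \leq \delta/2$ for $k$. Using the elementary bound $(k+1)(k+2) \geq k^2$, it suffices to require $k^2 \geq 4\hat{L}\|\hat{\vx}_\mu - \vx_0\|_2^2/\delta$; substituting the expression for $\hat{L}$ shows the right-hand side equals exactly $k_\delta^2$, so that $k \geq k_\delta$ guarantees the bound and hence $f(\vv_k) - f(\hat{\vx}) \leq \delta/2 + \delta/2 = \delta$. The only place demanding a little care is this final algebraic step: correctly propagating the factors of $\delta$ through $\hat{L}$ and checking that $\sqrt{4\hat{L}\|\hat{\vx}_\mu - \vx_0\|_2^2/\delta}$ collapses to the stated $(2/\delta)\|\hat{\vx}_\mu - \vx_0\|_2 \sqrt{L\delta + D\|\mB\|_{\rm op}^2}$. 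I do not anticipate any genuine obstacle, since every ingredient is already supplied by \eqref{uniform}, \eqref{equ_upper_bound_nesterov}, and the definition of $\hat{L}$ in \eqref{lmu}.
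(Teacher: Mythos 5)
Your proposal is correct and follows essentially the same route as the paper's proof: both derive \eqref{statement1} by combining the right-hand side of \eqref{uniform} at $\vv_{k}$ with the left-hand side of \eqref{equ_uniform_optima}, then substitute $\mu=\delta/D$ and the rate \eqref{equ_upper_bound_nesterov} (with $(k+1)(k+2)\geq k^{2}$, which the paper uses implicitly) to balance the smoothing bias $\delta/2$ against the optimization error. Your explicit verification that $\sqrt{4\hat{L}\|\hat{\vx}_{\mu}-\vx_{0}\|_{2}^{2}/\delta}$ collapses to $k_{\delta}$ is accurate and merely spells out algebra the paper leaves to the reader.
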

\begin{proof}
By combining \eqref{uniform} (for the choice $\vx = \vv_{k}$) with \eqref{equ_uniform_optima}, we have 
\begin{equation}
f(\vv_{k}) - f(\hat{\vx})  \leq f_{\mu}(\vv_{k}) - f_\mu(\hat{\vx}_{\mu}) + (\mu/2) D. 
\end{equation}
Choosing $\mu\!=\!\delta/D$ and using \eqref{equ_upper_bound_nesterov} for 
the particular choice $\hat{L}\!=\!L\!+\!(1/\mu)\|\mB\|_{\rm op}^2$ we obtain
\begin{align}
f(\vv_{k})\!-\!f(\hat{\vx})  & \leq \nonumber \\
& \hspace*{-25mm} (2/\delta)\|\hat\vx_{\mu}\!-\!\vx_0\|_2^2(L\delta\!+\!D \|\mB\|_{\rm op}^2)(1/k^2)\!+\!\delta/2,
\end{align}
which implies \eqref{equ_bound_acc}.
\end{proof}
According to Lemma \ref{altern} we need $k\propto 1/\delta$ iterations of 
Alg.\ \ref{nestro} for solving the nonsmooth optimization problem 
\eqref{equ_structured_problem_Nesterov} with accuracy $\delta$ (cf. \cite[Theorem 3]{nestrov2005}). 
This iteration complexity is essentially optimal for any first-order (sub-)gradient 
method solving problems of the form \eqref{equ_structured_problem_Nesterov} \cite{NemYudFOM}. 

The lower bound \eqref{equ_bound_acc}  on the iteration complexity of Alg.\ \ref{nestro} 
depends on both the desired accuracy $\delta$ (which is enforced by choosing the 
smoothing parameter as $\mu = \delta/D$) and the choice for the inital guess via 
$\|\hat{\vx}_{\mu}-\vx_{0}\|_{2}$. As discussed in \cite{becker2011nesta}, an effective approach 
to speed up the convergence of Alg.\ \ref{nestro} is to run it repeatedly with 
increasing accuracy (corresponding to decreasing values of the smoothing parameter $\mu$) 
and using the output of Alg.\ \ref{nestro} in a particular run as initial guess for the next run. Since 
the inital guesses used for Alg.\ \ref{nestro} in a new run becomes more accurate, it is possible 
to use a smaller value for the smooting parameter $\mu$, which effects an increased accuracy 
of the ouput of Alg.\ \ref{nestro} according to \eqref{equ_bound_acc}. However, a simpler option 
is to adapt the smoothing parameter directly ``on-the-fly'' within the iterations of Alg.\ \ref{nestro}. 
This results in Alg.\ \ref{acc_nestro} being an accelerated version of Alg.\ \ref{nestro}. 
\begin{algorithm}[h]
\caption{Accelerated Nesterov for solving \eqref{minfmu}}{}
\begin{algorithmic}[1]
\renewcommand{\algorithmicrequire}{\textbf{Input:}}
\renewcommand{\algorithmicensure}{\textbf{Output:}}
\Require initial smoothing parameter $\mu_{0}$, decreasing factor $\kappa$, initial guess $\vx_{0}$
\Statex\hspace{-6mm}{\bf Initialize:} iteration counter $k=0$
\Repeat
\State $\mu\!\defeq\!\mu_{0} \kappa^{k}$ 
\vspace*{2mm}
\State $\hat{L} \!\defeq\! L\!+\! (1/\mu) \|\mathbf{B}\|_{\rm op}^{2}$
\vspace*{2mm}
\State $\vg_{k} \!\defeq\! \nabla f_{\mu} (\vx_k) \!=\! \nabla \hat{f}(\vx)\!+\!\mB^* \mathbf{u}_\mu(\vx)$ with $\mathbf{u}_\mu(\vx)$ given by \eqref{umu} \label{algostep1} 
\vspace*{2mm}
\State $\vv_{k}\!\defeq\! \underset{\vx \in \mathcal{Q}_{1}}{\argmin} (\hat{L}/2) \| \vx\!-\!\vx_{k} \|_{\mathcal{H}_{1}}^{2} \!+\! \langle \vg_{k}, \vx\!-\!\vx_{k} \rangle_{\mathcal{H}_{1}}$
\vspace*{2mm}
\State $\vz_{k}\!\defeq\!\underset{\vx\in\mathcal{Q}_{1}}{\argmin} (\hat{L}/2) \| \vx\!-\!\vx_{0}\|_{\mathcal{H}_{1}}^{2}\!+\!\sum_{l=0}^k \frac{l\!+\!1}{2} \langle \vg_l, \vx\!-\!\vx_{l} \rangle_{\mathcal{H}_{1}}\label{un}$
\State $\vx_{k\!+\!1}\!\defeq\!\frac{2}{k\!+\!3} \vz_{k}\!+\!\Big(1\!-\!\frac{2}{k\!+\!3}\Big)\vv_k$ \label{algostep4}
\State $k \!\defeq\!k\!+\!1$
\Until{stopping criterion is satisfied}
\Ensure $\vv_k$
\end{algorithmic}
\label{acc_nestro}
\end{algorithm}

\section{Efficient Learning of Graph Signals}\label{sec4_signal_recovery}
We will now show that the semi-supervised learning problem \eqref{equ_min_constr} 
can be rephrased in the generic form \eqref{equ_structured_problem_Nesterov}. 
This will then allow us to apply Alg.\ \ref{nestro} for semi-supervised learning from big data, i.e., 
from high-dimensional heterogeneous data, over networks. 
To this end, we need to introduce the graph gradient 
operator $\nabla_{\!\mathcal{G}}$ as a mapping from the Hilbert space $\mathbb{R}^\signalsize$ 
endowed with inner product $\langle \va,\vb\rangle_2 \!=\! \va^T\vb$ into the Hilbert space 
$\mathbb{R}^{\signalsize\times \signalsize}$ endowed with inner product 
$\langle \mA,\mB\rangle_{\rm F}\!=\!\operatorname{Tr} \{\mA\mB^T\}$ \cite{JungSpawc2016,HannakAsilomar2016}. 
In particular, the gradient operator $\nabla_{\mathcal{G}}$ 
maps a graph signal $\vx\in\mathbb{R}^\signalsize$ to the matrix 
\begin{align} \label{equ_def_grad_matrix}
\nabla_{\!\mathcal{G}} \vx  \defeq
\begin{pmatrix} 
\nabla_{\!1} \mathbf{x},  \dots  ,\nabla_{\!\signalsize} \mathbf{x} 
\end{pmatrix}^T
\in \mathbb{R}^{\signalsize\times \signalsize}.
\end{align}
The $i$th row of the matrix $\nabla_{\!\mathcal{G}} \vx$ is given by the local gradient $\nabla_i\vx$ of 
the graph signal $\vx$ at node $i\in\mathcal{V}$ (cf. \eqref{equ_def_local_gradient}). 
Let us also highlight the close relation between the gradient operator $\nabla_{\mathcal{G}}:\mathbb{R}^{\signalsize} \rightarrow \mathbb{R}^{\signalsize \times \signalsize}$ and 
the normalized graph Laplacian matrix $\mathbf{L}$ \cite{ChungSpecGraphTheory}, defined element-wise as  
\begin{equation}
\big( \mathbf{L} \big)_{i,j} \defeq \begin{cases} 1 & \mbox{ if } i=j \mbox{ and } d_{i} \neq 0, \\ 
                                                         - 1/\sqrt{d_{i} d_{j}} & \mbox{ if } (i,j) \in \mathcal{E}, \\ 
                                                         0 & \mbox{ otherwise,}
                                                         \end{cases}
\end{equation} 
with $d_{i}$ being the degree of node $i \in \mathcal{V}$ (cf.\ \eqref{equ_def_node_degree}). 
If we define the diagonal matrix $\mathbf{D}$ with diagonal elements $d_{i,i} = d_{i}$, 
we have for any graph signal $\vx$ (cf.\ \cite[Eq.\ (6)]{shuman2013}) the identity 
\begin{equation}
\label{equ_quadratic_form_norm_L_nabla}
\| \nabla_{\!\mathcal{G}} \vx \|_{\rm F}^{2} = \mathbf{x}^{T} \mathbf{D}^{1/2} \mathbf{L} \mathbf{D}^{1/2} \mathbf{x}. 
\end{equation}  

We then define the divergence operator $\diver_{\mathcal{G}}: \mathbb{R}^{\signalsize \times \signalsize} \rightarrow \mathbb{R}^{\signalsize}$ 
as the negative adjoint of the gradient operator $\nabla_{\mathcal{G}}: \mathbb{R}^{\signalsize} \rightarrow \mathbb{R}^{\signalsize  \times \signalsize}$ (cf.\ \cite[Chapter 13]{SemiSupervisedBook}) , i.e., 
\begin{equation}
\label{equ_def_diver_neg_adj_nable}
\diver_{\mathcal{G}} \!\defeq\! - \nabla_{\mathcal{G}}^{*}.
\end{equation} 
A straightforward calculation (cf.\ \cite[Proposition 13.4]{SemiSupervisedBook}) reveals that the operator 
$\diver_\mathcal{G}$ maps a matrix $\mP\in \mathbb{R}^{\signalsize\times \signalsize}$ 
to the vector $\diver_\mathcal{G} \mP\in\mathbb{R}^\signalsize$ with entries 
\begin{align}
( \diver_\mathcal{G} \mathbf{P} )_{i} & =  \sum_{j \in \mathcal{V}} \sqrt{W_{i,j}} P_{i,j} - \sqrt{W_{j,i}} P_{j,i}  \nonumber \\ 
&  \stackrel{\eqref{equ_def_neighborhood}}{=} \sum_{j \in \mathcal{N}(i)} \sqrt{W_{i,j}} P_{i,j} - \sqrt{W_{j,i}} P_{j,i}  \label{equ_def_diver} . 
\end{align} 
We highlight the fact that both, the gradient $\nabla_{\mathcal{G}}: \mathbb{R}^{\signalsize} \rightarrow \mathbb{R}^{\signalsize \times \signalsize}$ as well 
as the divergence operator $\diver_\mathcal{G}$ 
depend on the graph structure due to the presence of the weights $W_{i,j}$ 
in \eqref{equ_def_local_gradient} and \eqref{equ_def_diver}. Moreover, the above 
definitions for the gradient and divergence operator over complex networks 
are straightforward generalizations of the well-known gradient and divergence 
operator for grid graphs representing 2D-images \cite{chambolle2004algorithm}. 

Using the identity $\| \nabla_{\!i}\vx \|_2 = \underset{\|\mathbf{p}_i\|_2 \leq 1}{\max} \langle \mathbf{p}_i,\!\nabla_{\!i}\vx \rangle_2$, 
we can represent the total variation \eqref{equ_def_TV_norm} as 
\begin{align}\label{tvconvex}
\| \vx \|_{\rm TV} 
 = \sum_{i \in \mathcal{V}}  \max_{\|\mathbf{p}_i\|_2 \leq 1} \langle \mathbf{p}_i,\nabla_{\!i}\vx \rangle_2
= \underset{\mathbf{P} \in \mathcal{P}}{\max}~ \langle \mathbf{P},\nabla_{\!\mathcal{G}}\vx\rangle_{\rm F} 
\end{align}
with the closed convex set 
\begin{align}\label{setP}
\mathcal{P} \defeq \{ \mathbf{P} = (\mathbf{p}_{1},\ldots,\mathbf{p}_{\signalsize})^{T} \in \mathbb{R}^{\signalsize \times \signalsize}: \\ 
 \|\mathbf{p}_{i} \|_{2} \leq 1 \mbox{ for every } i=1,\dots,\signalsize \}. \nonumber
\end{align}
Using \eqref{tvconvex}, the learning problem \eqref{equ_min_constr} can be written as
\begin{align}\label{minimize_t}
\underset{\vx\in \mathcal{Q} }{\min} ~f_0(\vx) \quad \text{with }
f_0(\vx) \defeq  \underset{\mathbf{P} \in \mathcal{P}}{\max}~ \langle \mathbf{P},\nabla_{\!\mathcal{G}} \vx\rangle_{\rm F}
\end{align}
with constraint set $\mathcal{Q} \!=\! \{\vx \in \mathbb{R}^{\signalsize}\!: 
\emperr[\vx] \leq \varepsilon\}$ (cf.\ \eqref{equ_def_emp_error} and \eqref{equ_min_constr}).
The optimization problem \eqref{minimize_t} is exactly of the form 
\eqref{equ_structured_problem_Nesterov} with the linear operator $\mB = \nabla_{\!\mathcal{G}}$, 
the functions $\hat{f}(\vx) \equiv 0$ and $\hat{g} (\vu)\equiv 0$, and the sets $\mathcal{Q}_1 = \mathcal{Q}$ 
and $\mathcal{Q}_2 = \mathcal{P}$. 
The smoothed version  (cf.\ \eqref{smoothed}) of the problem \eqref{equ_min_constr} 
is then obtained as
\begin{align}\label{minimize3_t}
&\underset{\vx\in \mathcal{Q} }{\min}~f_\mu(\vx) \quad\text{with }\\
& f_\mu(\vx) \defeq  \underset{\mathbf{P} \in \mathcal{P}}{\max} \left(\langle \mathbf{P},\!\nabla_{\!\mathcal{G}} \vx\rangle_{\rm F} \!-\!  (\mu/2)\|\mathbf{P}\|_{\rm F}^2\right). \nonumber
\end{align} 

In order to apply Alg.\ \ref{nestro} to the smoothed version \eqref{minimize3_t} 
of the learning problem \eqref{equ_min_constr}, 
we have to determine the gradient $\nabla f_\mu(\vx)$ and a corresponding 
valid Lipschitz constant $\hat{L}\geq L_\mu$ (cf.\ \eqref{lmu}). 
The gradient $\nabla f_{\mu}(\vx)$ is obtained by 
specializing \eqref{gradient_general} for the objective in \eqref{minimize3_t}, yielding 
\begin{equation}\label{gradfmu}
\nabla f_{\mu}(\vx)
 = -\diver_\mathcal{G} \mathbf{P}_{\mu} (\vx)
\end{equation}
with 
\begin{equation}\label{Pmu}
\mathbf{P}_\mu (\vx) =  \underset{\mathbf{P} \in \mathcal{P}}{\argmax}~ \left(\langle \mathbf{P},\nabla_{\!\mathcal{G}} \vx\rangle_{\rm F} -  (\mu/2)\|\mathbf{P}\|_{\rm F}^2\right).
\end{equation}
By the KKT conditions for constrained convex 
optimization problems \cite{JungSpawc2016,BoydConvexBook}, 
\begin{align}
\mathbf{P}_{\!\mu} (\vx) &= (\vq_{1},\ldots,\vq_{\signalsize})^{T} \\ 
\mbox{with } \vq_{i} & = \frac{1}{\max\{\mu,\|\nabla_{\!i}\vx\|_2\}} \nabla_{\!i}\vx  \nonumber
\end{align}
A particular Lipschitz constant for the gradient $\nabla f_{\mu}(\vx)$ 
is obtained, by specializing \eqref{lmu} to $\mathbf{B}=\nabla_{\!\mathcal{G}}$, as
\begin{equation}
L_\mu = (1/\mu)\|\nabla_{\!\mathcal{G}}\|_{\rm op}^2 \stackrel{\eqref{equ_def_diver_neg_adj_nable}}{=} (1/\mu) \|\diver_\mathcal{G}\|_{\rm op}^2.
\end{equation}
However, since evaluating the exact operator norm of the gradient (or divergence) 
operator is difficult for an arbitrary large-scale graph,\footnote{In many big data applications 
it is not possible to have a complete description of the graph, e.g. in form of an edge list, 
available. Instead, one typically has only knowledge about some basic parameters, e.g., 
the maximum node degree $d_{\rm max}$ (cf.\ \eqref{equ_def_max_node_degree}).} 
we will rely on a simple upper bound.
\begin{lemma}\label{lemmaopnorm}
Let $\mathcal{G}=(\mathcal{V},\mathcal{E},\mathbf{W})$ be a weighted undirected graph and let $\nabla_{\!\mathcal{G}}$
 denote the corresponding gradient operator \eqref{equ_def_grad_matrix}. 
The norm of the gradient operator satisfies 
\begin{equation}\label{opnorm}
\|\nabla_{\!\mathcal{G}}\|_{\rm op} \leq \sqrt{2 d_{\rm max}}
\end{equation}
with the maximum node degree $d_{\rm max}$ (cf.\ \eqref{equ_def_max_node_degree}). 
\end{lemma}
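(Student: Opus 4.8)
The plan is to reduce the statement to a bound on the largest eigenvalue of the graph Laplacian, using the quadratic-form identity \eqref{equ_quadratic_form_norm_L_nabla} already established in the paper. First I would recall that, by definition of the operator norm, $\|\nabla_{\!\mathcal{G}}\|_{\rm op}^2 = \sup_{\|\vx\|_2 \le 1} \|\nabla_{\!\mathcal{G}}\vx\|_{\rm F}^2$, so it suffices to show $\|\nabla_{\!\mathcal{G}}\vx\|_{\rm F}^2 \le 2 d_{\rm max}\|\vx\|_2^2$ for every $\vx \in \mathbb{R}^{\signalsize}$. Invoking \eqref{equ_quadratic_form_norm_L_nabla}, I rewrite the Frobenius norm as the quadratic form $\|\nabla_{\!\mathcal{G}}\vx\|_{\rm F}^2 = \vx^T \mathbf{D}^{1/2}\mathbf{L}\mathbf{D}^{1/2}\vx$, which, given the definition of $\mathbf{L}$, is exactly the quadratic form of the combinatorial graph Laplacian $\mathbf{D}-\mathbf{W}$. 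Thus the task becomes to establish the standard spectral bound $\lambda_{\max}(\mathbf{D}-\mathbf{W}) \le 2 d_{\rm max}$.

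Next I would express this quadratic form as a sum over edges. Using the symmetry $W_{i,j}=W_{j,i}$ and the degree definition \eqref{equ_def_node_degree}, a direct expansion gives
\begin{equation}
\vx^T \mathbf{D}^{1/2}\mathbf{L}\mathbf{D}^{1/2}\vx = \tfrac{1}{2}\sum_{i \in \mathcal{V}}\sum_{j \in \mathcal{V}} W_{i,j}(x_i - x_j)^2. \nonumber
\end{equation}
I would then apply the elementary inequality $(x_i-x_j)^2 \le 2(x_i^2 + x_j^2)$ termwise, which yields
\begin{equation}
\tfrac{1}{2}\sum_{i,j} W_{i,j}(x_i - x_j)^2 \le \sum_{i,j} W_{i,j}(x_i^2 + x_j^2) = 2\sum_{i \in \mathcal{V}} d_i\, x_i^2, \nonumber
\end{equation}
where the last equality again uses $\sum_{j} W_{i,j} = d_i$ together with the symmetry of $\mathbf{W}$. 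Bounding each degree by $d_i \le d_{\rm max}$ gives $2\sum_i d_i x_i^2 \le 2 d_{\rm max}\sum_i x_i^2 = 2 d_{\rm max}\|\vx\|_2^2$.

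Combining the two displays yields $\|\nabla_{\!\mathcal{G}}\vx\|_{\rm F}^2 \le 2 d_{\rm max}\|\vx\|_2^2$ for all $\vx$, and taking the supremum over the unit ball followed by a square root proves \eqref{opnorm}. I expect the only delicate point to be the bookkeeping of the constant: the factor $2$ from $(x_i-x_j)^2 \le 2(x_i^2+x_j^2)$ must be combined correctly with the doubling produced by summing both $x_i^2$ and $x_j^2$ against the symmetric weight matrix, so that one lands on exactly $2 d_{\rm max}$ rather than a looser constant such as $4 d_{\rm max}$. Everything else is a routine expansion that relies only on the already-established identity \eqref{equ_quadratic_form_norm_L_nabla} and the definition of the node degree.
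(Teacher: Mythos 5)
Your proof is correct, and after a shared first step it takes a genuinely different route from the paper's. Both arguments start from the identity \eqref{equ_quadratic_form_norm_L_nabla}, which reduces \eqref{opnorm} to the spectral bound $\|\mathbf{D}^{1/2}\mathbf{L}\mathbf{D}^{1/2}\|_2 \leq 2 d_{\rm max}$. The paper finishes in two strokes: submultiplicativity, $\|\mathbf{D}^{1/2}\mathbf{L}\mathbf{D}^{1/2}\|_2 \leq \|\mathbf{D}\|_2\,\|\mathbf{L}\|_2 \leq d_{\rm max}\|\mathbf{L}\|_2$, followed by the cited bound $\|\mathbf{L}\|_2 \leq 2$ for the normalized Laplacian (Chung, Lemma 1.7). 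You instead identify $\mathbf{D}^{1/2}\mathbf{L}\mathbf{D}^{1/2}$ with the combinatorial Laplacian $\mathbf{D}-\mathbf{W}$, expand its quadratic form as the edge sum $\tfrac{1}{2}\sum_{i,j}W_{i,j}(x_i-x_j)^2$, and apply $(x_i-x_j)^2 \leq 2(x_i^2+x_j^2)$ together with $\sum_j W_{i,j}=d_i$; this amounts to an elementary, self-contained proof of $\lambda_{\rm max}(\mathbf{D}-\mathbf{W}) \leq 2 d_{\rm max}$ rather than an appeal to an external spectral result. Your bookkeeping of the constant is right: the $\tfrac{1}{2}$ in the edge-sum identity cancels exactly one of the two doublings (the one from the elementary inequality, with the other coming from summing both $x_i^2$ and $x_j^2$ against the symmetric weights), landing on $2 d_{\rm max}$ rather than $4 d_{\rm max}$. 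What your route buys is transparency and independence from the literature---the constant $2$ is traced to a single elementary inequality---while the paper's route buys brevity and modularity, since any sharper bound on $\|\mathbf{L}\|_2$ (e.g., strict inequality for graphs with no bipartite component) would immediately tighten the lemma. One small point to make explicit: the identification $\mathbf{D}^{1/2}\mathbf{L}\mathbf{D}^{1/2} = \mathbf{D}-\mathbf{W}$ presumes the weighted normalized Laplacian with off-diagonal entries $-W_{i,j}/\sqrt{d_i d_j}$, whereas the paper's displayed definition literally writes $-1/\sqrt{d_i d_j}$; the weighted reading is clearly the intended one (otherwise \eqref{equ_quadratic_form_norm_L_nabla} could not hold for general weights), but your argument should state this, and it also tacitly uses $d_i > 0$, which is harmless under the paper's standing connectivity assumption.
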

\begin{proof}
Due to \eqref{equ_quadratic_form_norm_L_nabla}, we have 
\begin{equation} 
\|\nabla_{\!\mathcal{G}}\|^{2}_{\rm op}  = \| \mathbf{D}^{1/2} \mathbf{L} \mathbf{D}^{1/2} \|_{2}, 
\end{equation}
which, since obviously $\| \mathbf{D}\|_{2} \leq d_{\rm max}$, implies 
\begin{equation} 
\|\nabla_{\!\mathcal{G}}\|^{2}_{\rm op} \leq d_{\rm max} \| \mathbf{L} \|_{2}. 
\end{equation} 
The bound \eqref{opnorm} follows from the well-known upper bound $\| \mathbf{L} \|_{2} \leq 2$ for 
the maximum eigenvalue (which is equal to the spectral norm) of the normalized 
Laplacian matrix $\mathbf{L}$ (cf.\ \cite[Lemma 1.7]{ChungSpecGraphTheory})
\end{proof}
According to Lemma \ref{lemmaopnorm}, the gradient $\nabla f_\mu(\vx)$ (cf.\ \eqref{gradfmu}) of $f_\mu(\vx)$ is Lipschitz 
with constant  
\begin{equation}\label{lipschitz1}
\hat{L}\!=\! 2 d_{\rm max}/\mu, 
\end{equation}
which we can use as input to Alg.\ \ref{nestro}.

In order to apply Alg.\ \ref{nestro} to the smoothed learning problem \eqref{minimize3_t}, 
we now present closed-form expressions for the updates in step $3$ and $4$ of Alg.\ 
\ref{nestro} for $\mathcal{Q}_{1}=\mathcal{Q}$ (cf.\ \eqref{equ_min_constr}).
\begin{lemma}
\label{lem_orthog_proj_closed_form}
Consider the convex set $\mathcal{Q}=\{\vx: \emperr[\vx] \leq \varepsilon\}$ and let
$\vy$ denote any labeling which is consistent with the initial labels $y_{i}$, i.e., $\big(\vy\big)_{i} = y_{i}$ for all $i \!\in\! \mathcal{S}$. 
Then, using the shorthand $\mathbf{D}(\mathcal{S}) \defeq \sum_{i \in \mathcal{S}} \mathbf{e}_{i} \mathbf{e}_{i}^{T}$, 
the solution $\vv_k$ of the optimization problem 
\begin{equation}
\label{equ_orthog_proj_closed_form_constr_opt1}
\vv_{k}\!=\!\underset{\vx \in \mathcal{Q}}{\argmin} (\hat{L}/2) \| \vx\!-\!\vx_{k} \|_{2}^{2} \!+\! \vg^{T}_{k} (\vx\!-\!\vx_{k})
\end{equation} 
is given by 
\begin{align}\label{equ_closed_form_step3}
\vv_{k}&\!=\!\big(\mathbf{I}\!+\!\lambda_{\varepsilon} \mathbf{D}(\mathcal{S}) \big)^{-1} (\vq\!+\!\lambda_{\varepsilon} \mathbf{D}(\mathcal{S}) \vy) \\ 
& \hspace*{-5mm}= (\mathbf{I}\!-\!\mathbf{D}(\mathcal{S}))\vq\!+\! 
\begin{cases}
\mathbf{D}(\mathcal{S})\big[\vy \!+\! (\varepsilon/ r)(\vq\!-\!\vy)\big]
&\hspace*{-3mm} \mbox{if }  r \!>\! \varepsilon \\
\mathbf{D}(\mathcal{S}) \vq & \hspace*{-10mm} \mbox{otherwise}
\end{cases} \nonumber
\end{align}
with
\begin{align}
\vq &\!\defeq\!\vx_{k}\!-\!(1/\hat{L}) \vg_{k} \mbox{, } r \!\defeq\!\emperr[\vq\!-\!\vy] \\ 
& \mbox{, and }  \lambda_{\varepsilon}\!\defeq\!\max\{0, (r/\varepsilon)\!-\!1\}. \nonumber
\end{align}
In a similar manner, the solution $\vz_{k}$ of the optimization problem 
\begin{equation}
 \vz_{k} \!=\! \underset{\vx\in\mathcal{Q}}{\argmin} (\hat{L}/2) \| \vx\!-\!\vx_{0}\|_{2}^{2}\!+\!(1/2)\sum_{l=0}^{k} (l\!+\!1) \vg^{T}_l (\vx\!-\!\vx_{l}) 
\end{equation} 
is given by 
\begin{align}\label{equ_closed_form_step4}
\vz_{k} &\!=\! \big(\mathbf{I} + \tilde{\lambda}_{\varepsilon} \mathbf{D}(\mathcal{S}) \big)^{-1} (\tilde{\vq}\!+\!\tilde{\lambda}_{\varepsilon} \mathbf{D}(\mathcal{S}) \vy) \\
&\hspace*{-4mm}\!=\! (\mathbf{I}\!-\!\mathbf{D}(\mathcal{S}))\tilde{\vq}\!+\! 
\begin{cases}
\mathbf{D}(\mathcal{S})\big[\vy \!+\! (\varepsilon/ r)(\tilde{\vq}\!-\!\vy)\big]
&\hspace*{-4mm}\mbox{if }  r \!>\! \varepsilon \\
\mathbf{D}(\mathcal{S}) \tilde{\vq} &\hspace*{-10mm}\mbox{otherwise}
\end{cases} \nonumber
\end{align}
with 
\begin{align}
\tilde{\vq}&\!\defeq\!\vx_{0}\!-\!(1/2\hat{L}) \sum_{l=0}^{k}(l\!+\!1) \vg_{l} \mbox{, } \tilde{r}\!\defeq\!\emperr[\tilde{\vq}\!-\!\vy] \\ 
& \mbox{, and }   \tilde{\lambda}_\varepsilon = \max\{0,(\tilde{r}/\varepsilon)\!-\!1\}.  \nonumber
\end{align}
\end{lemma}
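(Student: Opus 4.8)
The plan is to reduce both optimization problems to a single orthogonal projection onto $\mathcal{Q}$ and then to evaluate that projection in closed form. First I would rewrite the objective in \eqref{equ_orthog_proj_closed_form_constr_opt1} by completing the square: since $(\hat{L}/2)\|\vx\!-\!\vx_{k}\|_{2}^{2} + \vg_{k}^{T}(\vx\!-\!\vx_{k}) = (\hat{L}/2)\|\vx\!-\!\vq\|_{2}^{2} + c$ with $\vq = \vx_{k}\!-\!(1/\hat{L})\vg_{k}$ and a constant $c$ independent of $\vx$, minimizing over $\vx \in \mathcal{Q}$ is the same as minimizing $\|\vx\!-\!\vq\|_{2}^{2}$, i.e.\ $\vv_{k} = \pi_{\mathcal{Q}}(\vq)$. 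The identical manipulation applied to the step-$4$ objective, grouping the linear terms into $(1/2)\sum_{l=0}^{k}(l\!+\!1)\vg_{l}$, shows $\vz_{k} = \pi_{\mathcal{Q}}(\tilde{\vq})$ with $\tilde{\vq} = \vx_{0}\!-\!(1/2\hat{L})\sum_{l=0}^{k}(l\!+\!1)\vg_{l}$. Thus the two displayed formulas are the same projection applied to two different centres, and it suffices to compute $\pi_{\mathcal{Q}}$ once.

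Next I would exploit the structure of $\mathcal{Q}$. The constraint $\emperr[\vx] \leq \varepsilon$ reads $\sum_{i \in \mathcal{S}}(x_{i}\!-\!y_{i})^{2} \leq 2|\mathcal{S}|\varepsilon^{2}$ and involves only the coordinates indexed by $\mathcal{S}$, while those in $\mathcal{V}\setminus\mathcal{S}$ are free. Hence the Euclidean projection decouples: on $\mathcal{V}\setminus\mathcal{S}$ it leaves $\vq$ untouched, which is the term $(\mathbf{I}\!-\!\mathbf{D}(\mathcal{S}))\vq$, while on $\mathcal{S}$ it projects onto a Euclidean ball centred at $\vy$. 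Since all these coordinates carry the common weight $1/(2|\mathcal{S}|)$, the ball is a genuine (scaled) Euclidean ball, so the restricted projection keeps $\vq$ when it already lies inside (the case $r \leq \varepsilon$) and otherwise retracts it radially onto the sphere, giving $\vy + (\varepsilon/r)(\vq\!-\!\vy)$ on $\mathcal{S}$. Here I would check that the radial factor reduces to exactly $\varepsilon/r$ because the common weight cancels in the ratio $\emperr[\vq\!-\!\vy]/\varepsilon = r/\varepsilon$; this yields the explicit case-based second line of \eqref{equ_closed_form_step3}.

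To obtain the equivalent resolvent form, I would set up the KKT conditions for $\min_{\vx}\|\vx\!-\!\vq\|_{2}^{2}$ subject to $\emperr[\vx]\leq\varepsilon$. Stationarity of the Lagrangian yields $(\mathbf{I} + \lambda_{\varepsilon}\mathbf{D}(\mathcal{S}))\vx = \vq + \lambda_{\varepsilon}\mathbf{D}(\mathcal{S})\vy$, which is precisely the first line of \eqref{equ_closed_form_step3}; the multiplier $\lambda_{\varepsilon} \geq 0$ is then pinned down by complementary slackness. When the constraint is active I would substitute the coordinatewise solution $x_{i}\!-\!y_{i} = (q_{i}\!-\!y_{i})/(1\!+\!\lambda_{\varepsilon})$ into the active constraint to get $1\!+\!\lambda_{\varepsilon} = r/\varepsilon$, and when it is slack $\lambda_{\varepsilon} = 0$, i.e.\ $\lambda_{\varepsilon} = \max\{0,(r/\varepsilon)\!-\!1\}$ as claimed; re-inserting this value recovers the explicit radial form, closing the equivalence of the two lines. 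The argument for $\vz_{k}$ is verbatim with $\vq,r,\lambda_{\varepsilon}$ replaced by $\tilde{\vq},\tilde{r},\tilde{\lambda}_{\varepsilon}$.

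The main obstacle is bookkeeping rather than conceptual: I must track the normalisation $1/(2|\mathcal{S}|)$ in $\emperr$ so that the radius and the radial factor $\varepsilon/r$ emerge with the stated constants, and I must verify that the two presented closed forms genuinely coincide, which is exactly where the determination of $\lambda_{\varepsilon}$ from the active-constraint equation does the real work. Everything else, namely convexity of $\mathcal{Q}$ guaranteeing a well-defined projection and the decoupling across $\mathcal{S}$ and its complement, is routine.
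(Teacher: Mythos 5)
Your proof is correct, but it is organized differently from the paper's. The paper never completes the square: it applies the KKT conditions directly to the objective $(\hat{L}/2)\|\vx-\vx_{k}\|_{2}^{2}+\vg_{k}^{T}(\vx-\vx_{k})$ with the constraint written as $(\emperr[\vx])^{2}\leq\varepsilon^{2}$, solves the stationarity equation to get the resolvent form $\vv_{k}=(\mathbf{I}+\tilde{\lambda}_{\varepsilon}\mathbf{D}(\mathcal{S}))^{-1}(\vq+\tilde{\lambda}_{\varepsilon}\mathbf{D}(\mathcal{S})\vy)$, expands it via the identity $(\mathbf{I}+a\mathbf{D}(\mathcal{S}))^{-1}=\mathbf{I}-\frac{a}{1+a}\mathbf{D}(\mathcal{S})$, and then pins down $\tilde{\lambda}_{\varepsilon}$ by substituting back into the error constraint and invoking complementary slackness, closely following the NESTA derivation of \cite[Sec.~3]{becker2011nesta}; in the paper the resolvent line of \eqref{equ_closed_form_step3} is primary and the case-based line is a consequence. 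You instead first reduce both subproblems to a single Euclidean projection $\pi_{\mathcal{Q}}$ of a shifted center (your centers $\vq$ and $\tilde{\vq}$ match the lemma exactly), compute that projection geometrically --- coordinates outside $\mathcal{S}$ are unconstrained, and on $\mathcal{S}$ one radially retracts onto a ball of radius $\sqrt{2|\mathcal{S}|}\,\varepsilon$ centered at the restriction of $\vy$, with the normalization $1/(2|\mathcal{S}|)$ cancelling to give exactly the factor $\varepsilon/r$, as you note --- and only then use KKT stationarity to exhibit the equivalent resolvent form, recovering the multiplier from the active constraint via $1+\lambda_{\varepsilon}=r/\varepsilon$, which is the same complementary-slackness computation that does the real work in the paper. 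Your route makes the case-based formula immediate and the structure of the solution transparent, and it verifies rather than asserts the equivalence of the two displayed lines; the paper's route handles both lines in one Lagrangian computation without any decoupling argument and produces the resolvent form (the one actually used to interface with Alg.~\ref{nestro}) directly. Your reading of the slightly abusive notation $r=\emperr[\vq-\vy]$ as the weighted seminorm of $\vq-\vy$ restricted to $\mathcal{S}$ is the intended one, and your treatment of $\vz_{k}$ by substitution of $\tilde{\vq}$, $\tilde{r}$, $\tilde{\lambda}_{\varepsilon}$ mirrors the paper, which likewise only details the $\vv_{k}$ case.
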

\begin{proof}
see Appendix \ref{proof_lem_orthog_proj_closed_form}.  
\end{proof}
The closed-form expressions \eqref{equ_closed_form_step3} and \eqref{equ_closed_form_step4} are suitable modifications 
of those presented in \cite[Sec. 3]{becker2011nesta} 
to our setting of semi-supervised learning over complex networks. 
We are now in the position to specialize Alg.\ \ref{nestro} to the 
smoothed learning problem \eqref{minimize3_t} by 
using the closed-form expressions \eqref{equ_closed_form_step3} and \eqref{equ_closed_form_step4} 
for step $3$ and $4$ of Alg.\ \ref{nestro}. 
This results in Alg.\ \ref{nestrov_alg} for semi-supervised learning from 
big data over networks.

\begin{algorithm}[h!]
\caption{Semi-Supervised Learning via Nesterov's Method}
\begin{algorithmic}[1]
\renewcommand{\algorithmicrequire}{\textbf{Input:}}
\renewcommand{\algorithmicensure}{\textbf{Output:}}
\Require dataset $\mathcal{D}$ with empirical graph $\mathcal{G}$, 
subset $\mathcal{S} = \{i_1,\dots,i_\samplesize\}$ of datapoints with initial labels 
$\{y_{j}\}_{j\in\mathcal{S}}$, error level $\varepsilon$,
smoothing parameter $\mu$, initial guess for the labeling $\vx_0 \in \mathbb{R}^\signalsize$
\Statex\hspace{-6mm}{\bf Initialize:} $k\!\defeq\!0$, Lipschitz constant $\hat{L}\!\defeq\!(2/\mu)d_{\rm max}$, $\tilde{\vq}_{0}\!\defeq\!\vx_{0}$, $\alpha\!\defeq\!1/2$
 \Repeat
\State $\forall i\in\mathcal{V}:$ $\vp_{i} \defeq  \frac{1}{\max\{\mu,\|\nabla_{\!i}\vx_k\|_2\}} \nabla_{\!i}\vx_k$ 
\State $\vg_{k} \!\defeq\!  -\diver_\mathcal{G} (\mathbf{P})$ with $\mP\!=\!(\vp_1,\dots,\vp_\signalsize)^T$ 
\State $\vq_{k} \!\defeq\!  \vx_{k}\!-\!(1/\hat{L})\vg_{k}$ 
\State $r \!\defeq\! \emperr[\vq_k]$ (cf.\ \eqref{equ_def_emp_error})
\State $\forall i \in \mathcal{V}:$  
$
\hat{x}_{k,i} \defeq
\begin{cases}
 y_i + (\varepsilon/ r)(q_{k,i} - y_i)
&\mbox{if } i \in \mathcal{S} \mbox{ and } r> \varepsilon\\
q_{k,i} &\mbox{otherwise}
\end{cases}
$

\State $\tilde{\vq}_{k} \defeq \tilde{\vq}_{k}\!-\!(\alpha/\hat{L}) \vg_{k} $ 
\State $\tilde{r} \defeq \emperr[\tilde{\vq}_{k}]$
\State $\forall i \in \mathcal{V}:$ 
$
z_{k,i} \defeq
\begin{cases}
 y_{i}\!+\!(\varepsilon/ \tilde{r})(\tilde{q}_{k,i}\!-\!y_i) 
&\mbox{if } i \in \mathcal{S} \mbox{ and } \tilde{r}> \varepsilon\\
\tilde{q}_{k,i} &\mbox{otherwise}
\end{cases}
$		
\State $\vx_{k\!+\!1} \defeq  \frac{2}{k\!+\!3} \vz_{k}\!+\!(1\!-\!\frac{2}{k\!+\!3})\vv_{k}$
\State $k \defeq k\!+\!1$
\State $\alpha \defeq \alpha +1/2$ 
\Until{stopping criterion is satisfied}
\Ensure learned labeling $\vv_k$ for all datapoints
\end{algorithmic}\label{nestrov_alg}
\end{algorithm}
The steps $2$ and $3$ of Alg.\ \ref{nestrov_alg} amount to computing the gradient $\vg_{k} = \nabla f_{\mu} (\vx_k)$ 
of the objective function $f_{\mu}(\vx)$ in the learning problem \eqref{minimize3_t}. The steps $4$-$6$ of Alg.\ \ref{nestrov_alg}
implement a projected gradient descent step, while steps $7$-$9$ amount to computing the minimizer 
$\vz_{k}$ of the approximation in step $4$ of Alg.\ \ref{nestro}. 

Combining \eqref{equ_upper_bound_nesterov} with \eqref{lipschitz1}, 
yields the following characterization of the 
convergence rate of Alg.\ \ref{nestrov_alg}: 
\begin{equation}
\label{equ_bound_conve_graph_topology}
 f_{\mu} (\vv_k) - f_{\mu} (\hat{\vx}_{\mu})\leq \frac{4 d_{\rm max} \|\hat{\vx}_{\mu} - \vx_0\|_2^2}{\mu(k+1)(k+2)}
\end{equation} 
for any solution $\hat{\vx}_{\mu}$ of \eqref{minimize3_t}. 
The bound \eqref{equ_bound_conve_graph_topology} suggests that the convergence is 
faster for graphs which are more sparse, i.e., 
have a smaller maximum node degree $d_{\rm max}$. 
As for Alg.\ \ref{nestro}, the convergence speed of Alg.\ \ref{nestrov_alg} depends on the accuracy 
of the inital guess as well as on the smoothing parameter $\mu$. The accelerated 
version of Alg.\ \ref{nestrov_alg} is then obtained from Alg.\ \ref{acc_nestro}, yielding 
Alg.\ \ref{nestrov_alg_acc}.
\begin{algorithm}[h!]
\caption{Semi-Supervised Learning via Accelerated Nesterov}
\begin{algorithmic}[1]
\renewcommand{\algorithmicrequire}{\textbf{Input:}}
\renewcommand{\algorithmicensure}{\textbf{Output:}}
\Require dataset $\mathcal{D}$ with empirical graph $\mathcal{G}$, 
subset $\mathcal{S} = \{i_1,\dots,i_\samplesize\}$ of datapoints with initial labels 
$\{y_{j}\}_{j\in\mathcal{S}}$, error level $\varepsilon$, 
initial smoothing parameter $\mu_{0}$, decreasing factor $\kappa$, 
initial guess for the labeling $\vx_0 \in \mathbb{R}^\signalsize$
\Statex\hspace{-6mm}{\bf Initialize:} $k\!\defeq\!0$, $\tilde{\vq}_{0}\!\defeq\!\vx_{0}$, $\alpha\!\defeq\!1/2$
 \Repeat
\State $\mu \!\defeq\! \mu_{0} \kappa^{k}$ 
\State $\hat{L}\!\defeq\!(2/\mu)d_{\rm max}$
\State $\forall i\in\mathcal{V}:$ $\vp_{i} \defeq  \frac{1}{\max\{\mu,\|\nabla_{\!i}\vx_k\|_2\}} \nabla_{\!i}\vx_k$ 
\State $\vg_{k} \!\defeq\!  -\diver_\mathcal{G} (\mathbf{P})$ with $\mP\!=\!(\vp_1,\dots,\vp_\signalsize)^T$ 
\State $\vq_{k} \!\defeq\!  \vx_{k}\!-\!(1/\hat{L})\vg_{k}$ 
\State $r \!\defeq\! \emperr[\vq_k]$ (cf.\ \eqref{equ_def_emp_error})
\State $\forall i \in \mathcal{V}:$  
$
\hat{x}_{k,i} \defeq
\begin{cases}
 y_i + (\varepsilon/ r)(q_{k,i} - y_i)
&\mbox{if } i \in \mathcal{S} \mbox{ and } r> \varepsilon\\
q_{k,i} &\mbox{otherwise}
\end{cases}
$

\State $\tilde{\vq}_{k} \defeq \tilde{\vq}_{k}\!-\!(\alpha/\hat{L}) \vg_{k} $ 
\State $\tilde{r} \defeq \emperr[\tilde{\vq}_{k}]$
\State $\forall i \in \mathcal{V}:$ 
$
z_{k,i} \defeq
\begin{cases}
 y_{i}\!+\!(\varepsilon/ \tilde{r})(\tilde{q}_{k,i}\!-\!y_i) 
&\mbox{if } i \in \mathcal{S} \mbox{ and } \tilde{r}> \varepsilon\\
\tilde{q}_{k,i} &\mbox{otherwise}
\end{cases}
$		
\State $\vx_{k\!+\!1} \defeq  \frac{2}{k\!+\!3} \vz_{k}\!+\!(1\!-\!\frac{2}{k\!+\!3})\vv_{k}$
\State $k \defeq k\!+\!1$
\State $\alpha \defeq \alpha +1/2$ 
\Until{stopping criterion is satisfied}
\Ensure learned labeling $\vv_{k}$ for all datapoints
\end{algorithmic}\label{nestrov_alg_acc}
\end{algorithm}

\section{Message passing formulations}\label{sec6_message}
In order to make semi-supervised learning via the optimization problem  
\eqref{minimize3_t} feasible for massive (internet-scale) datasets, 
we will now discuss a message passing formulation of Alg.\ \ref{nestrov_alg}, 
which is summarized as Alg.\ \ref{MP_nestrov_alg} (being an adaption of \cite[Alg.\ 2]{HannakAsilomar2016} 
to undirected graphs): 
The steps 2-6 of Alg.\ \ref{MP_nestrov_alg} amount to computing the (scaled) gradient $\nabla f_{\mu}(\vx)$ 
(cf. \eqref{gradfmu}) of the objective function $f_{\mu}(\vx)$ for the problem  
\eqref{minimize3_t} in a distributed manner. 
The quantities $P_{i,j}$ are the entries of the matrix $\mP_\mu(\vx)$ (cf.\ \eqref{Pmu}). 
The steps 11-15 and 17-21 of Alg.\ \ref{MP_nestrov_alg} implement a finite number $K$ 
of iterations of the average consensus algorithm, 
using Metropolis-Hastings weights \cite{Xiao07} , for (approximately) computing the 
sums $(1/\signalsize)\sum_{j \in\mathcal{V}}b_j = (1/\signalsize)\sum_{j \in\mathcal{S}}(y_j - q_j)^2$ 
and $(1/\signalsize)\sum_{j \in\mathcal{V}}\tilde{b}_j = (1/\signalsize)\sum_{j \in\mathcal{S}}(y_j - \tilde{q}_j)^2$, 
respectively. In particular, for sufficiently large $K$, the results $r_i$ and $\tilde{r}_i$ 
in step 16 and 22 of Alg.\ \ref{MP_nestrov_alg} satisfy $r_i\approx r$ and 
$\tilde{r}_i \approx \tilde{r}$ for every node $i\in\mathcal{V}$ (cf.\ step 6,7 of Alg.\ \ref{nestrov_alg}). 
The choice for the number $K$ of avarage consensus iterations can be guided by a wide range of 
results characterzing the convergence rate of average conensus \cite{BoydFastestMixing2004,Xiao07,Diaconis1998}. 
As a rule of thumb, $K$ should be significantly larger than the diameter $d(\mathcal{G})$ of 
the underlying graph $\mathcal{G}$ \cite[Thm. 4.3.]{Diaconis1998}. 
We highlight that Alg.\ \ref{MP_nestrov_alg} requires each node $i\in\mathcal{V}$ 
to have access to local information only. In particular, to implement Alg.\ \ref{MP_nestrov_alg} 
on a given node $i\in\mathcal{V}$, the measurement $y_j$, the value $x_j$, 
the matrix entries $P_{i,j}$ and the edge weights $W_{i,j}$ are required 
only for node $i$ itself and its neighborhood $\mathcal{N}(i)$.

\begin{algorithm}[h]
	\caption{Distributed Semi-Supervised Learning via Nesterov's Method}{}
	\begin{algorithmic}[1]
		\vspace*{1mm}
\renewcommand{\algorithmicrequire}{\textbf{Input:}}
\renewcommand{\algorithmicensure}{\textbf{Output:}}
		\Require{ sampling set $\mathcal{S} = \{i_1,\dots,i_\samplesize\}$, samples
$\{y_{j}\}_{j\in\mathcal{S}}$, error level $\varepsilon$, edge weights $\{W_{i,j}\}_{i,j\in\mathcal{V}}$,
smoothing parameter $\mu$,
initial guess $\{x_{0,i}\}_{i\in\mathcal{V}}$, number $K$ of average consensus iterations}
	\Statex\hspace{-6mm}{\bf Initialize:} $\hat{L} \!\defeq\! (2/\mu)d_{\rm max}$, 
	$\forall i \in \mathcal{V}:x_{i}\!=\!x_{0,i}$, $\alpha\!\defeq\!1/2$, $\tau\!\defeq\!2/3$,
	$\{u_{i,j} \!\defeq\!1/(\max\{d_i,d_j\}\!+\!1)\}_{j\in\mathcal{N}(i)}$
\Repeat
\State $\forall i\in\mathcal{V}:$  broadcast $x_i$ to neighbors 
$\mathcal{N}(i)$/ collect $\{x_j\}_{j\in \mathcal{N}(i)}$ from neighbors $\mathcal{N}(i)$
 \State 
$\forall i\in\mathcal{V}:$ update 
$
\gamma_i = \sqrt{\sum_{j\in\mathcal{N}(i)} (x_j - x_i)^2 W_{i,j}}$
\State $\forall i \in \mathcal{V}:$ for neighbor $j \in \mathcal{N}(i)$ update 
$ 
		P_{i,j} = \frac{1}{\max\{\mu,\gamma_i\}} (x_j - x_i) \sqrt{W_{i,j}}
$
\State $\forall i\in\mathcal{V}:$ broadcast $P_{i,j}$  to neighbors 
$\mathcal{N}(i)$/ collect $\{P_{j,i}\}_{j\in \mathcal{N}(i)}$ from neighbors $\mathcal{N}(i)$
\State $\forall i\in\mathcal{V}:$ update 
$
g_i = (1/\hat{L}) \big( \sum_{j\in \mathcal{N}(i)} \sqrt{W_{j,i}}P_{j,i} -\sum_{j\in \mathcal{N}(i)}\sqrt{W_{i,j}}P_{i,j} \big)
$
 \State $\forall i\in\mathcal{V}:$  update $q_i = x_i - g_i$
\State $\forall i\in\mathcal{V}:$ update $\overline{g}_i = \overline{g}_i - \alpha g_i$
\State $\forall i\in\mathcal{V}:$ update $\alpha = \alpha+1/2$
\State $\forall i\in\mathcal{V}:$ update 
    $\tilde{q}_i = x_{0,i} \!-\! \overline{g}_i$ 
\State $\forall i\in\mathcal{V}:$ update 
$	b_i =
		\begin{cases}
		(y_i - q_i)^2 &\mbox{ for } i\in \mathcal{S}\\
		0  &\mbox{ else}
		\end{cases}$
\For{\texttt{l = 1:K}}
        \State $\forall i\in\mathcal{V}:$  broadcast $b_i$ to neighbors  $\mathcal{N}(i)$/ collect $\{b_j\}_{j\in \mathcal{N}(i)}$ from $\mathcal{N}(i)$
 \State $\forall i\in\mathcal{V}:$  update 
$b_i  = \big(1 - \sum_{j \in \mathcal{N}(i)} u_{i,j} \big) b_i +
\sum_{j\in \mathcal{N}(i)}u_{i,j} b_j$
      \EndFor
\State $\forall i\in\mathcal{V}:$ update $r_i = \sqrt{\signalsize b_i}$
\State $\forall i\in\mathcal{V}:$ update 
$	\tilde{b}_i =
		\begin{cases}
		(y_i - \tilde{q}_i)^2 &\mbox{ for } i\in \mathcal{S}\\
		0  &\mbox{ else}
		\end{cases}$
\For{\texttt{l = 1:K}}
\State $\forall i\in\mathcal{V}:$  broadcast $\tilde{b}_i$ to neighbors 
$\mathcal{N}(i)$/collect $\{\tilde{b}_j\}_{j\in \mathcal{N}(i)}$ from $\mathcal{N}(i)$
 \State $\forall i\in\mathcal{V}:$  update 
$\tilde{b}_i  = \big(1 - \sum_{j \in \mathcal{N}(i)} u_{i,j} \big) \tilde{b}_i +
\sum_{j\in \mathcal{N}(i)}u_{i,j} \tilde{b}_j$
      \EndFor
\State $\forall i\in\mathcal{V}:$ update $\tilde{r}_i= \sqrt{\signalsize\tilde{b}_i}$	
\State $\forall i \in \mathcal{V}:$ update 
$
\hat{x}_i =
\begin{cases}
 y_i + (\varepsilon/ r)(q_i -y_i)
&\mbox{if } i \in \mathcal{S} \mbox{ and } r> \varepsilon\\
q_i &\mbox{otherwise}
\end{cases}
$
\State $\forall i \in \mathcal{V}:$ update 
$
z_i =
\begin{cases}
 y_i + (\varepsilon/ \tilde{r})(\tilde{q}_i -y_i) 
&\mbox{if } i \in \mathcal{S} \mbox{ and } \tilde{r}> \varepsilon\\
\tilde{q}_i &\mbox{otherwise}
\end{cases}
$				
\State $\forall i\in\mathcal{V}:$ update $x_i = \tau z_i + (1-\tau)\hat{x}_i$
\State $\forall i\in\mathcal{V}:$ update $\tau = \Big( (1/\tau)+ (1/2)\Big)^{-1}$
\Until{stopping criterion is satisfied}
	\Ensure $\hat{x}_i$
	\end{algorithmic}\label{MP_nestrov_alg}
	\end{algorithm}


We implemented Alg.\ \ref{MP_nestrov_alg} using the big data framework \textsc{akka} \cite{accabook}, 
which is a toolkit for building distributed and resilient message-driven applications.
The \textsc{akka} implementation was run on a computing cluster 
composed of nine virtual machines (one master and eight slave workers) 
obtained via the cloud computing service Amazon EC2. 
Each virtual machine has been configured with a 64-bit CPU, $3.75$ 
GB of main memory, and $8$ GB of local disk storage.
In Figure \ref{fig:acca1} we sketch the basic architecture 
of the \textsc{akka} implementation of our graph learning methods. 
First, we partition the graph in a simple uniform manner, i.e., 
node $i\in \{1,\ldots,N\}$ is assigend to partition $(i \bmod 8)+1$. 
After partitioning $\mathcal{G}$, the master machine 
assigns the obtained partitions to the eight workers and 
manages the execution of the message passing algorithm 
between the workers. There are two alternating phases 
in the execution of the \textsc{akka} implementation: 
the master phase, where the states of the 
worker machines are synchronized and the worker phase. 
Two types of operations are executed in the worker phase:
\begin{itemize}
	\item intra-block operations: each worker performs local computations within its associated partition, and
	\item inter-block operations: workers exchange messages across their partitions.
\end{itemize}
We then compared the runtime of the AKKA implementation to the 
centralized implementation in MATLAB used in \cite{HannakAsilomar2016}. 
The results indicate a runtime reduction by almost a factor $10$ 
which is reasonable since we are using a cluster of nine machines. 

\begin{figure}[h]
	\centering
	\hspace*{0em}\includegraphics[width=.5\linewidth]{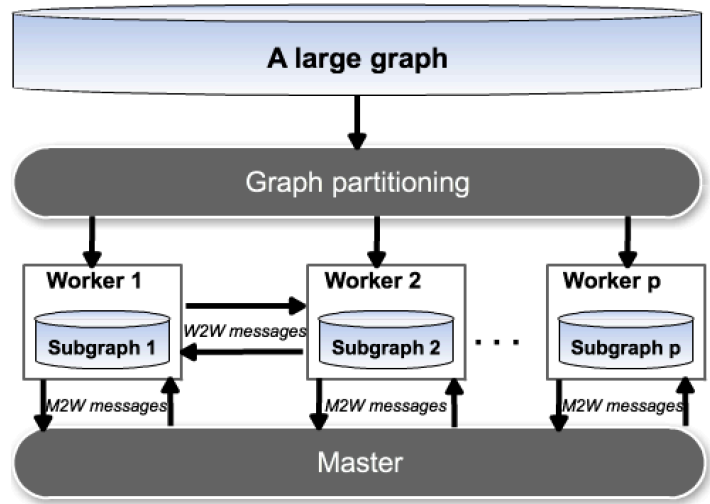}
	\captionsetup{justification=centering}
	\caption{\textsc{akka} big data framework overview.} \label{fig:acca1}
\end{figure}

\section{Numerical Experiments} 
\label{sec_num_exp}

We assess the performance of (the accelerated version of) the proposed learning algorithm 
Alg.\ \ref{nestrov_alg_acc} empirically by applying it 
to an synthetic dataset with empirical graph 
$\mathcal{G}=(\mathcal{V},\mathcal{E},\mathbf{W})$, depicted in Fig.\ \ref{fig_emp_grpah}, 
whose nodes 
are made up of $\nrcluster$ disjoint clusters $\mathcal{C}_{c}$ of same size 
$|\mathcal{C}_{c}| \!=\! \nodespercluster$ giving a total graph size of 
$\signalsize\!=\!\nrcluster \cdot \nodespercluster$ nodes. 
\begin{figure}[h]
	\centering
	\hspace*{0em}\includegraphics[width=.8\linewidth]{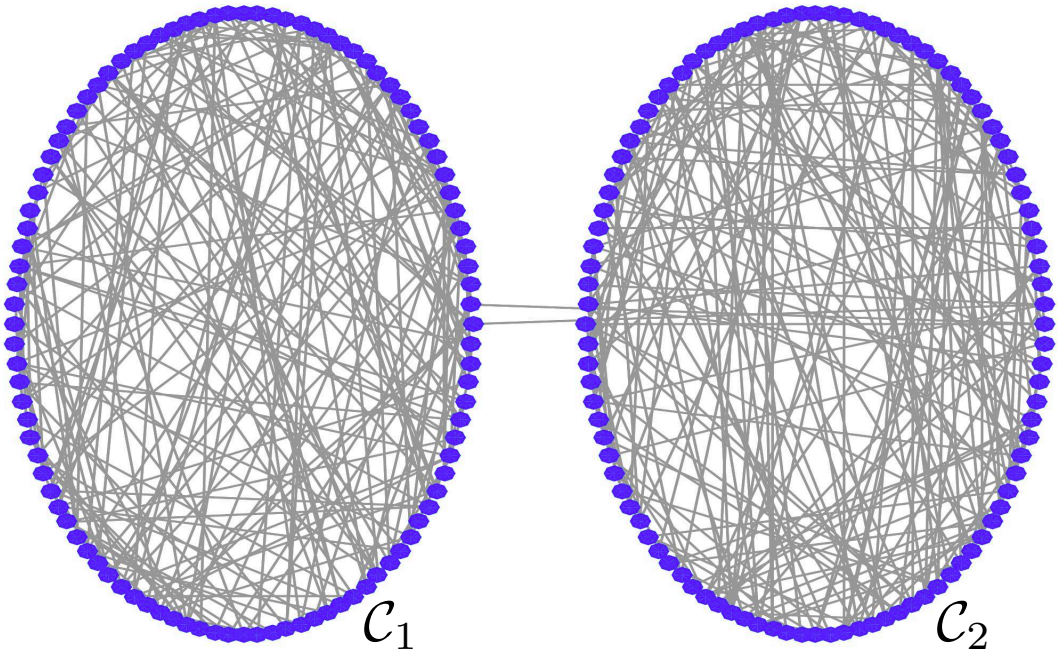}
	\captionsetup{justification=centering}
	\caption{Empirical graph $\mathcal{G}$ made up of $\nrcluster$ clusters 
	$\mathcal{C}_{1}$ and $\mathcal{C}_{2}$, each consisting of $\nodespercluster$ nodes.}.
	\label{fig_emp_grpah}
\end{figure}
The clusters are connected through few ``gate'' nodes. 
The maximum node degree of $\mathcal{G}$ is $d_{\rm max} \!=\!8$. 
Given the empirical graph $\mathcal{G}$, we generated a labeling $\vx^{(g)}$ 
by labeling the nodes for each cluster $\mathcal{C}_{c}$ by a 
random number $t_{c} \sim \mathcal{N}(0,1)$, i.e., $x^{(g)}_{i}\!=\!t_{c}$ 
for all nodes $i \in \mathcal{C}_{c}$ in the cluster $\mathcal{C}_{c}$. 
For each cluster $\mathcal{C}_{c}$ we assume that we are provided 
initial labels $y_{i}  = x_{i}$ for $\samplespercluster$ randomly 
choosen nodes $i \in \mathcal{C}_{c}$, giving rise to an overall 
sampling set $\mathcal{S}$ with $\samplesize\!=\! \nrcluster \cdot \samplespercluster$ 
nodes. We run Alg.\ \ref{nestrov_alg_acc} with initial smoothing parameter 
$\mu\!=\!1$, decreasing factor $\kappa\!=\!(2\cdot10^{-5})^{1/\numiter}$, 
error bound $\varepsilon \!\defeq\! \|\vx^{(g)} \|_{2} / 10^{5}$ 
and a fixed number of $\numiter$ iterations, to 
obtain the learned labels $\hat{x}_{i}$ for every 
node $i \in \mathcal{V}$. In Fig.\ \ref{fig_learnedlabeling}, we show 
the learned labeling $\hat{x}_{i}$ output by Alg.\ \ref{nestrov_alg_acc}. 
We also show the learned labeling $\hat{x}^{\rm LP}_{i}$ obtained 
using the well-known label progagation (LP) algorithm \cite[Alg.\ 11.1]{SemiSupervisedBook} 
which is run for the same number of iterations. 
\begin{figure}[h]
	\centering
	\hspace*{0em}\includegraphics[width=.8\linewidth]{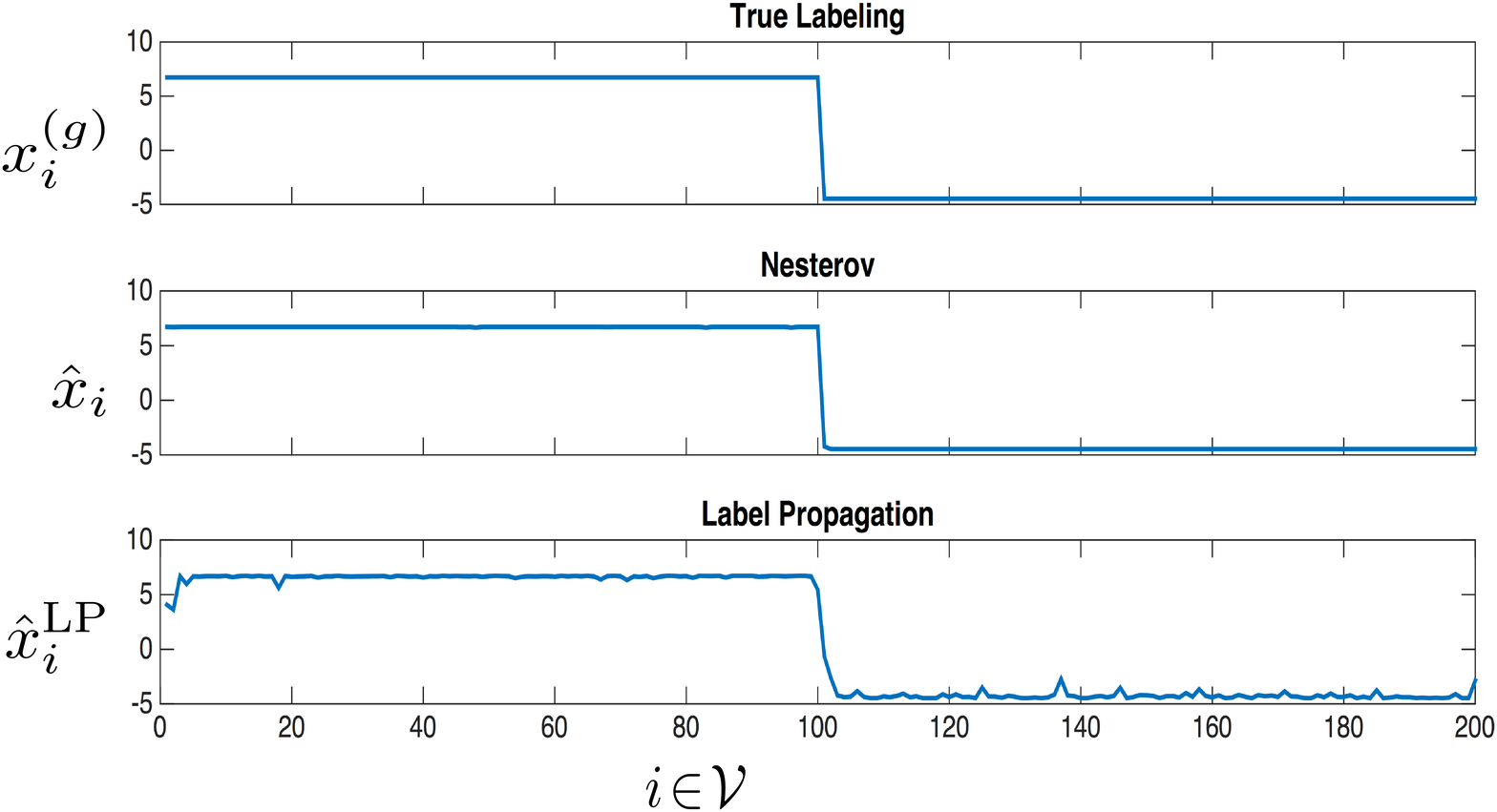}
	\captionsetup{justification=centering}
	\caption{True labels $x_{i}$ and labelings $\hat{x}_{i}$ and $\hat{x}^{\rm LP}_{i}$, 
	obtained by our method (Alg.\ \ref{nestrov_alg_acc}) and from LP.}.
	\label{fig_learnedlabeling}
\end{figure}
From Fig.\ \ref{fig_learnedlabeling}, it is evident that Alg.\ \ref{nestrov_alg_acc} 
yields better learning accuracy compared to plain LP, which is also reflected in 
the empirical normalized MSEs 
${\rm NMSE}_{\rm nest} \approx 2.1 \times 10^{-4}$ and ${\rm NMSE}_{\rm LP} \approx 2.4 \times 10^{-3}$  
obtained by averaging $\| \hat{\vx}\!-\!\vx^{(g)} \|^{2}_{2} / \| \vx^{(g)} \|^{2}_{2}$ and 
$\| \hat{\vx}^{\rm LP}\!-\!\vx^{(g)} \|^{2}_{2} / \| \vx^{(g)} \|^{2}_{2}$ over $100$ independent Monte Carlo runs. 
We have also depicted the dependence of the NMSE of Alg.\ \ref{nestrov_alg_acc} and 
LP on the iteration number $k$ in Fig.\ \ref{fig_convMSE}, which shows  
that after some inital phase, which comprises $\approx 100$ iterations, the NMSE obtained by 
Alg.\ \ref{nestrov_alg_acc} converges quickly to its stationary value. 
Remarkably, According to Fig.\ \ref{fig_convMSE}, the simple LP method provides 
smaller NMSE for the first few iterations. However, the comparison of the 
convergence speed of Alg.\ \ref{nestrov_alg_acc} and LP should be interpreted 
carefully, since the optimization problem underlying LP is based on the smooth 
Laplacian quadratic form \cite[Sec.\ 11.3.]{SemiSupervisedBook}, 
whereas Alg.\ \ref{nestrov_alg_acc} amounts to solving the 
non-smooth problem \eqref{equ_min_constr}. 
\begin{figure}[h]
	\centering
	\hspace*{0em}\includegraphics[width=.8\linewidth]{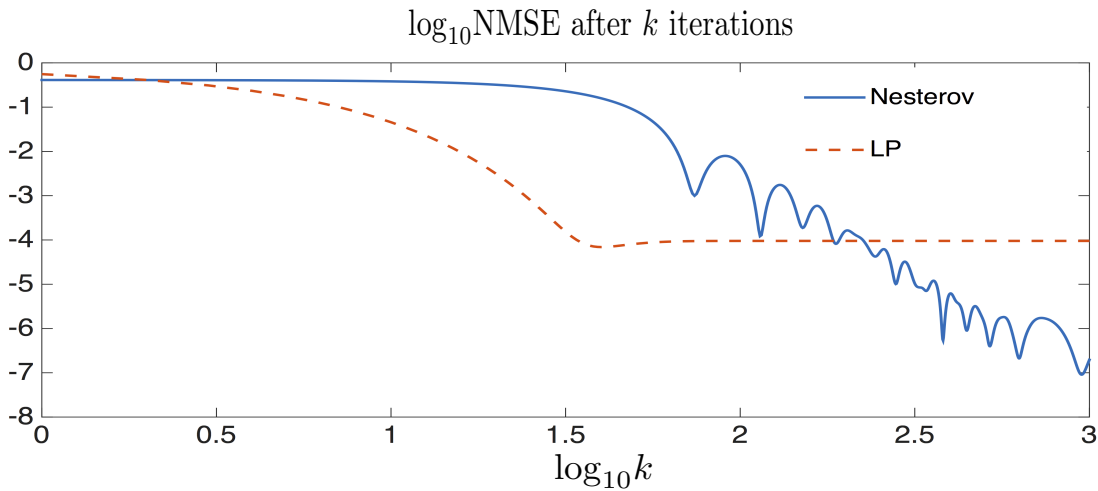}
	\captionsetup{justification=centering}
	\caption{Overall NMSE of Alg.\ \ref{nestrov_alg_acc} (${\rm NMSE}_{\rm nest}$) and label propagation (${\rm NMSE}_{\rm LP}$) vs.\ iteration number $k$.}.
	\label{fig_convMSE}
\end{figure}

\section{Conclusions}
The problem of semi-supervised learning from massive datastes over networks 
has been formulated as a nonsmooth convex optimization problem based on penalizing 
the total variation of the labeling. We applied the smoothing technique of 
Nesterov to this optimization 
problem for obtaining an efficient learning algorithm which can capitalize on 
huge amounts of unlabeled data using only few labeled datapoints. 
Moreover, we proposed an implementation of the learning 
method as message passing over the underlying data graph. This message passing algorithm 
can be easily implemented in a big data platform such as AKKA to 
allow for scalable learning algorithms. Future work includes the extension 
of the optimization framework to accomodate loss functions, different from the mean squared error, 
that better characterize the training error for discrete valued or categorial labels. 


\appendices


\section{Proof of Lemma \ref{lem_orthog_proj_closed_form}}
 \label{proof_lem_orthog_proj_closed_form}
We only detail the derivation of \eqref{equ_closed_form_step3}, since the derviation 
of \eqref{equ_closed_form_step4} is very similar. 
Our argument closely follows the derivations used in \cite[Sec. 3]{becker2011nesta}. 
Consider the constrained convex optimization problem \eqref{equ_orthog_proj_closed_form_constr_opt1}, 
which we repeat here for convenience:  
\begin{equation} 
\label{equ_consttr_appx_repeat}
\vv_k = \underset{\vx \in \mathcal{Q}}{\argmin} (\hat{L}/2) \| \vx- \vx_{k} \|_{2}^{2} \!+\! \vg^{T}_{k} (\vx\!-\!\vx_{k})
\end{equation} 
with constraint set $\mathcal{Q}\defeq\{ \vx : \emperr[\vx] \leq \varepsilon\} = \{ \vx : \big(\emperr[\vx]\big)^{2} \leq \varepsilon^2\}$. 
The Lagrangian associated with \eqref{equ_consttr_appx_repeat} is 
\begin{align}
\mathcal{L}(\vx,\lambda)&\!=\!(\hat{L}/2) \| \vx\!-\!\vx_{k} \|_{2}^{2} \!+\! \vg^{T}_{k} (\vx\!-\!\vx_{k}) \nonumber \\ 
& \!+\!\lambda( \big( \emperr[\vx]\big)^{2}\!-\!\varepsilon^{2}), 
\end{align}
and the corresponding KKT conditions for $\vv_{k}$ and $\lambda_{\varepsilon}$ 
to be primal and dual optimal read \cite[Section 5.5.3]{BoydConvexBook}
\begin{align}
\hat{L} (\vv_{k}\!-\!\vx_{k})\!+\!\vg_{k}\!+\!(\lambda_{\epsilon}/|\mathcal{S}|) \mathbf{D}(\mathcal{S}) (\vv_{k}\!-\!\vy)\!=\!\mathbf{0},\label{stationary}\\
\lambda_\varepsilon (\emperr[\vv_{k}]- \varepsilon) = 0,\label{equ_compslack}\\
\emperr[\vv_{k}]  \leq \varepsilon ,\label{error_bound}\\
\lambda_\varepsilon \geq 0,
\end{align}
with the diagonal matrix $\mathbf{D}(\mathcal{S})= \sum_{i \in \mathcal{S}} \mathbf{e}_{i} \mathbf{e}_{i}^{T}$. 
From condition \eqref{stationary}, we obtain 
\begin{equation}\label{stat2aaa}
\vv_{k}\!=\!\big(\mathbf{I}\!+\!\tilde{\lambda}_{\varepsilon} \mathbf{D}(\mathcal{S}) \big)^{-1} (\vx_{k}\!-\!(1/\hat{L}) \vg_{k}\!+\!\tilde{\lambda}_{\varepsilon} \mathbf{D}(\mathcal{S}) \vy).
\end{equation}
with 
\begin{equation}
 \tilde{\lambda}_{\varepsilon} \defeq \frac{\lambda_{\epsilon}}{\hat{L} |\mathcal{S}|} \label{equ_def_tilde_lambda}
\end{equation}
Using the elementary identity 
\begin{equation}
(\mathbf{I} + a \mathbf{D}(\mathcal{S}))^{-1} = \mathbf{I} - \frac{a}{1+a} \mathbf{D}(\mathcal{S})
\end{equation} 
which is valid for any $a \geq 0$, we can develop \eqref{stat2aaa} further to 
\begin{equation}\label{wsolved}
\vv_{k} = \Big(\mathbf{I} - \frac{ \tilde{\lambda}_{\varepsilon} }{1+\tilde{\lambda}_{\varepsilon} }\mathbf{D}(\mathcal{S}) \Big)(\vx_{k}-(1/\hat{L}) \vg_{k} + \tilde{\lambda}_{\varepsilon} \mathbf{D}(\mathcal{S}) \vy).
\end{equation}
Inserting \eqref{wsolved} into \eqref{error_bound} yields 
\begin{align}\label{error_inequ}
\emperr[\vv_{k}] & \stackrel{\eqref{wsolved}}{=} \emperr[ \big(\mathbf{I} - \frac{ \tilde{\lambda}_{\varepsilon} }{1+ \tilde{\lambda}_{\varepsilon} }\mathbf{D}(\mathcal{S}) \big)(\vx_{k}-(1/\hat{L}) \vg_{k} \nonumber \\ 
&+ \tilde{\lambda}_{\varepsilon} \mathbf{D}(\mathcal{S}) \vy)]  \nonumber \\ 
& \stackrel{\eqref{equ_def_emp_error}}{=} \frac{1}{(1+ \tilde{\lambda}_{\varepsilon})^2} \emperr[\vx_{k}-(1/\hat{L}) \vg_{k} - \vy)] \stackrel{\eqref{error_bound}}{\leq} \varepsilon.
\end{align}
From \eqref{error_inequ}, we have
\begin{equation}\label{eps_inequ}
\tilde{\lambda}_{\varepsilon} \geq  (1/\varepsilon)\emperr[\vx_{k}\!-\!(1/\hat{L}) \vg_{k}\!-\!\vy)]\!-\!1.
\end{equation}
Thus if  $(1/\varepsilon)\emperr[\vx_{k}\!-\!(1/\hat{L}) \vg_{k}\!-\!\vy)]\!>\!1$, then \eqref{eps_inequ} implies $\lambda_\varepsilon >0$, 
which, via \eqref{equ_compslack}, requires the inequality \eqref{error_inequ} to become an equality, i.e., 
\begin{equation}
\tilde{\lambda}_{\varepsilon}\!=\!(1/\varepsilon)\emperr[\vx_{k}\!-\!(1/\hat{L}) \vg_{k}\!-\!\vy)]\!-\!1
\end{equation}
This equality holds also if $ \emperr[\vx_{k}\!-\!(1/\hat{L}) \vg_{k}\!-\!\vy)]/\varepsilon =1$.
For $ \emperr[\vx_{k}\!-\!(1/\hat{L}) \vg_{k}\!-\!\vy)]/\varepsilon  <1$, complementary slackness 
\eqref{equ_compslack} requires $\tilde{\lambda}_{\varepsilon} = 0$. 
Thus the optimal dual variable $\tilde{\lambda}_{\varepsilon}$ is fully determined by the quantity 
$\emperr[\vx_{k}-(1/\hat{L}) \vg_{k} - \vy)]$ via 
\begin{equation}
\label{eq_C}
\tilde{\lambda}_\varepsilon = \max\{0,(1/\varepsilon)\emperr[\vx_{k}-(1/\hat{L}) \vg_{k} - \vy)]\!-\!1\}.
\end{equation}

\newpage
\bibliographystyle{abbrv}
\bibliography{LitAJ_JournalCvx,tf-zentral}

\begin{thebibliography}{10}

\bibitem{becker2011nesta}
S.~Becker, J.~Bobin, and E.~J. Cand{\`e}s.
\newblock {NESTA}: a fast and accurate first-order method for sparse recovery.
\newblock {\em SIAM Journal on Imaging Sciences}, 4(1):1--39, 2011.

\bibitem{BertsekasNonLinProgr}
D.~P. Bertsekas.
\newblock {\em Nonlinear Programming}.
\newblock Athena Scientific, Belmont, MA.

\bibitem{BishopBook}
C.~M. Bishop.
\newblock {\em Pattern Recognition and Machine Learning}.
\newblock 2006.

\bibitem{BoydFastestMixing2004}
S.~Boyd, P.~Diaconis, and L.~Xiao.
\newblock Fastest mixing markov chain on a graph.
\newblock {\em SIAM Review}, 46(4):667--689, 2004.

\bibitem{BoydConvexBook}
S.~Boyd and L.~Vandenberghe.
\newblock {\em Convex Optimization}.
\newblock Cambridge Univ. Press, Cambridge, UK, 2004.

\bibitem{chambolle2004algorithm}
A.~Chambolle.
\newblock An algorithm for total variation minimization and applications.
\newblock {\em Journal of Mathematical imaging and vision}, 20(1-2):89--97,
  2004.

\bibitem{SemiSupervisedBook}
O.~Chapelle, B.~Sch{\"o}lkopf, and A.~Zien, editors.
\newblock {\em Semi-Supervised Learning}.
\newblock The MIT Press, Cambridge, Massachusetts, 2006.

\bibitem{ChungSpecGraphTheory}
F.~R.~K. Chung.
\newblock {\em Spectral Graph Theory}.
\newblock 1997.

\bibitem{BigDataNetworksBook}
S.~Cui, A.~Hero, Z.-Q. Luo, and J.~Moura, editors.
\newblock {\em Big Data over Networks}.
\newblock Cambridge Univ. Press, 2016.

\bibitem{Diaconis1998}
P.~Diaconis.
\newblock What do we know about the metropolis algorithm?
\newblock {\em Journal of Computer and System Sciences}, 57:20--38, 1998.

\bibitem{DonohoCursesBlessings}
D.~L. Donoho.
\newblock High-dimensional data analysis: The curses and blessings of
  dimensionality.
\newblock In {\em Amer. Math. Soc. Lecture:``Math challenges of the 21st
  century''}, 2000.

\bibitem{Fortunato2009}
S.~Fortunato.
\newblock Community detection in graphs.
\newblock {\em arXiv}, 2009.

\bibitem{HannakAsilomar2016}
G.~Hannak, P.~Berger, G.~Matz, and A.~Jung.
\newblock Efficient graph signal recovery over big networks.
\newblock In {\em Proc. Asilomar Conf. Signals, Sstems, Computers}, 2016.

\bibitem{NemYudFOM}
A.~Juditsky and A.~Nemirovski.
\newblock First-order methods for nonsmooth convex large-scale optimization,
  {I}: General purpose methods.
\newblock In S.~Sra, S.~Nowozin, and S.~Wright, editors, {\em Optimization for
  Machine Learning}, pages 121--147. MIT press, 2011.

\bibitem{JungSpawc2016}
A.~Jung, P.~Berger, G.~Hannak, and G.~Matz.
\newblock Scalable graph signal recovery for big data over networks.
\newblock In {\em 2016 IEEE 17th International Workshop on Signal Processing
  Advances in Wireless Communications (SPAWC)}, pages 1--6, July 2016.

\bibitem{McKinseyBigdata}
J.~Mayika, B.~Brown, J.~Bughin, R.~Dobbs, C.~Roxburgh, and A.~H. Byers.
\newblock Big data: The next frontier for innovation, competition, and
  productivity.
\newblock {\em McKinsey Global Institute}, May 2011.

\bibitem{Moldaschl2014}
M.~Moldaschl, W.~N. Gansterer, O.~Hlinka, F.~Meyer, and F.~Hlawatsch.
\newblock Distributed decorrelation in sensor networks with application to
  distributed particle filtering.
\newblock pages 6117--6121, May 2014.

\bibitem{nestrov04}
Y.~Nesterov.
\newblock {\em Introductory lectures on convex optimization}, volume~87 of {\em
  Applied Optimization}.
\newblock Kluwer Academic Publishers, Boston, MA, 2004.
\newblock A basic course.

\bibitem{nestrov2005}
Y.~Nesterov.
\newblock Smooth minimization of non-smooth functions.
\newblock {\em Math. Program.}, 103(1, Ser. A):127--152, 2005.

\bibitem{NewmannBook}
M.~Newman.
\newblock {\em Networks: An Introduction}.
\newblock Oxford Univ. Press, 2010.

\bibitem{Quan20009}
Z.~Quan, W.~J. Kaiser, and A.~H. Sayed.
\newblock Innovations diffusion: A spatial sampling scheme for distributed
  estimation and detection.
\newblock {\em IEEE Trans. Signal Processing}, 57(2):738--751, Feb. 2009.

\bibitem{shuman2013}
D.~I. Shuman, S.~K. Narang, P.~Frossard, A.~Ortega, and P.~Vandergheynst.
\newblock The emerging field of signal processing on graphs: {E}xtending
  high-dimensional data analysis to networks and other irregular domains.
\newblock {\em IEEE Signal Processing Magazine}, 30(3):83--98, May 2013.

\bibitem{HadoopDefinitiveGuide}
T.~White.
\newblock {\em Hadoop: The Definitive Guide}.
\newblock O'Reilly, 2009.

\bibitem{WieselHero2012}
A.~Wiesel and A.~O. Hero.
\newblock Distributed covariance estimation in {G}aussian graphical models.
\newblock {\em IEEE Trans. Signal Processing}, 60(1):211 -- 220, Jan. 2012.

\bibitem{accabook}
D.~Wyatt.
\newblock {\em Akka Concurrency}.
\newblock Artima Incorporation, USA, 2013.

\bibitem{Xiao07}
L.~Xiao, S.~Boyd, and S.-J. Kim.
\newblock Distributed average consensus with least-mean-square deviation.
\newblock {\em Journal of Parallel and Distributed Computing}, 67(1):33--46,
  2007.

\end{thebibliography}

\end{document}